\newtheorem{theorem}{Theorem}
\newtheorem{definition}{Definition}
\DeclareMathOperator{\sign}{sign}
\newcommand{\bme}{\bm{e}}
\newcommand{\bmh}{\bm{h}}
\newcommand{\bmr}{\bm{r}}
\newcommand{\bmu}{\bm{u}}
\newcommand{\bmx}{\bm{x}}
\newcommand{\bmy}{\bm{y}}
\newcommand{\rmd}{\mathrm{d}}
\newcommand{\bmzero}{\bm{0}}
\newcommand{\bmbeta}{\bm{\beta}}
\newcommand{\bmdelta}{\bm{\delta}}
\newcommand{\bmtheta}{\bm{\theta}}
\newcommand{\sfI}{\mathsf{I}}
\newcommand{\calD}{\mathcal{D}}
\newcommand{\calN}{\mathcal{N}}
\newcommand{\calS}{\mathcal{S}}
\title{Black-Box Anomaly Attribution
\thanks{This is an expanded version of \cite{Ide21AAAI}. Part of the content has also been presented in~\cite{ide2023generative}. The original version was submitted to a journal on May 8, 2021.
} 
}
\author{
  Tsuyoshi Id\'{e} \quad \quad  \quad
  Naoki~Abe \\
  IBM Research, Thomas J. Watson Research Center \\
1101 Kitchawan Rd., Yorktown Heights, NY 10598, USA\\
  \texttt{\{tide,nabe\}@us.ibm.com} 
}
\begin{document}
\maketitle

\begin{abstract}
    When the prediction of a black-box machine learning model deviates from the true observation, what can be said about the reason behind that deviation? This is a fundamental and ubiquitous question that the end user in a business or industrial AI application often asks. The deviation may be due to a sub-optimal black-box model, or it may be simply because the sample in question is an outlier. In either case, one would ideally wish to obtain some form of attribution score --- a value indicative of the extent to which an input variable is responsible for the anomaly. 
    In the present paper we address this task of ``anomaly attribution,'' particularly in the setting in which the model is black-box and the training data are not available. Specifically, we propose a novel likelihood-based attribution framework we call the ``likelihood compensation (LC),'' in which the responsibility score is equated with the correction on each input variable needed to attain the highest possible likelihood. We begin by showing formally why mainstream model-agnostic explanation methods, such as the local linear surrogate modeling and Shapley values, are not designed to explain anomalies. In particular, we show that they are ``deviation-agnostic,'' namely, that their explanations are blind to the fact that there is a deviation in the model prediction for the sample of interest. We do this by positioning these existing methods under the unified umbrella of a function family we call the ``integrated gradient family.'' We validate the effectiveness of the proposed LC approach using publicly available data sets. We also conduct a case study with a real-world building energy prediction task and confirm its usefulness in practice based on expert feedback.   
\end{abstract}

\keywords{Explainable AI \and Anomaly attribution \and Shapley Value \and Integrated gradient \and LIME}


\section{Introduction}\label{sec:Introduction}
With the recent advances in machine learning algorithms and their wide-spread deployment, automated anomaly detection has become a critical component in many modern industrial systems. In its most ambitious form, it is coupled with the idea of a ``digital twin,'' which has recently emerged in the manufacturing industry~\cite{tao2018digital2,fuller2020digital}. The ``digital twin'' captures the behavior of an entire production system with a machine learning model.
As the model is a replica of the system under the normal operating conditions, any significant deviation from its prediction implies the presence of some anomalies. The expectation then is that critical incidents can be prevented by leveraging the digital twin model in a cycle of prediction, evaluation, and risk mitigation.

Despite the initial optimism, however, broad deployment of digital twins is still far from reality. One of the biggest concerns, as perceived by the end-user, is the lack of explainability, and hence actionability, of a typical digital twin model. Consider, for instance, the use-case of building energy management (see Sec.~\ref{sec:experiments} for more details). The energy consumption of a building, $y$, is predicted with a regression function $y=f(\bmx)$, where $\bmx$ is a vector of measurements such as the outside temperature and humidity. The monitoring system typically consists of a few sub-components including those for HVAC (heating, ventilating, and air conditioning), sensing and data collection, and data management. Since they are often developed by different vendors using proprietary technologies, the end-user may not have first-hand knowledge on the prediction model and the training data used to train it, even when they have the ownership of the overall system. In such a scenario, the prediction system acts as black-box, often leaving the end-user in the dark in cases of anomalies, i.e.\ prediction deviations. 

In this paper, we address the task of \textbf{anomaly attribution} in the black-box \textit{regression} setting. We assume the model is ``\textit{doubly black-box},'' meaning that we neither have access to the parametric form of the regression function $f(\bmx)$ nor its training data (see Table~\ref{table:black-box} for more details).  
In this especially challenging setting, we strive to provide a reliable ``attribution score'' for each of the input variables with regard to the anomaly in question. See Fig.~\ref{fig:problem_setting_and_deviation_v_increment} for illustration. We call this task ``anomaly attribution.'' A large deviation from the observation may be due to sub-optimal model training, or simply because the observed sample is an outlier. In either case, the attribution score indicative of the extent to which an input variable is responsible for the anomalous output will be useful in aiding the end-user make better decisions on what action to take, e.g.\ using their knowledge on the system.

\begin{figure}[t]
\begin{center}
\includegraphics[trim={0.5cm 4.cm 0.5cm 1cm},clip,width=12cm]{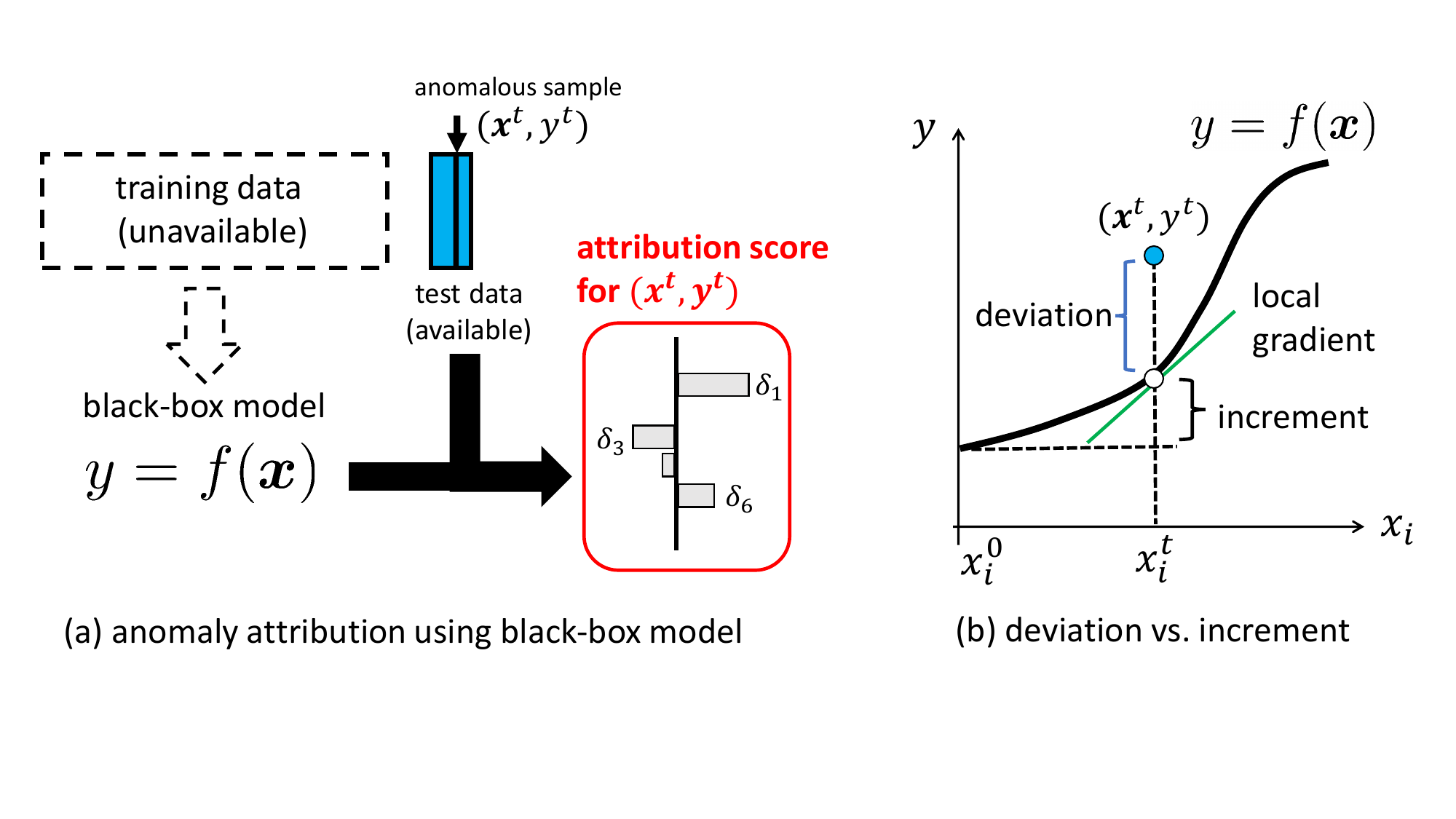}
\end{center}
\vspace{-0.2cm}
\caption{Problem setting and motivation. (a) Given a black-box regression model and anomalous sample(s), our goal is to quantify input variables' responsibility without using training data. (b)~Existing attribution methods attempt to explain either the local gradient or the increment from a reference point $\bmx^0$, rather than the deviation.}
\label{fig:problem_setting_and_deviation_v_increment}
\end{figure}

Anomaly attribution has previously been studied as a sub-task of anomaly detection. For instance, in subspace-based anomaly detection, computing each variable's responsibility has been part of the standard procedure in the white-box setting~\cite{Chandola09AnomalySurvey,Jiang11LKDD,dang2013outlier,dang2013local,micenkova2013explaining}. On the other hand, in the field of explainable artificial intelligence (XAI), growing attention has been paid on ``post-hoc'' explanations of black-box prediction models. Among numerous XAI techniques, as conveniently summarized in recent review articles~\cite{burkart2021survey,molnar2019interpretable,samek2019explainable,arrieta2020explainable,speith2022review}, there are a few methods that address model-agnostic feature attribution in the \textit{regression} setting: 
1) Local linear surrogate modeling, which is best known under the name LIME (Local Interpretable Model-agnostic Explanations)~\cite{ribeiro2016should}, 
2) Shapley value (SV), which was first introduced to the machine learning community by~\cite{kononenko2010efficient}, and
3) integrated gradient (IG)~\cite{sundararajan2017axiomatic}. 
We collectively call these approaches the \textit{function-based} approach as they work with the prediction function $f(\bmx)$ directly for attribution. One obvious limitation of these methods is that they are designed to explain what $f(\bmx)$ looks like at a test sample $\bmx^t$, rather than to explain the deviation $f(\bm{x}^t)-y^t$.

In this paper, we propose a novel \textit{likelihood-based} framework to anomaly attribution in the doubly black-box regression setting. Likelihood is a legitimate starting point for explaining deviation as it is a canonical metric of non-anomalousness. We begin by pointing out that existing attribution methods, LIME, SV, and IG, are inherently \textit{deviation-agnostic}, and, in fact, are \textit{not} appropriate for anomaly explanation. Interestingly, one can show that they are derived from a unified framework we call the \textit{integrated gradient family}, and the deviation-agnostic property is a general characteristic of the IG family. Based on the solid understanding of the function-based approach, we propose the new notion of \textit{likelihood compensation} (LC), which seeks a local perturbation that achieves the highest possible likelihood. As explained later (see Fig.~\ref{fig:LC_convergence.pdf} for a preview), LC as ``deviation measured horizontally'' is a useful attribution score as it is interpreted as an action that might be taken to bring back the outlying sample to normalcy. To the best of our knowledge, this is the first principled framework for model-agnostic anomaly attribution in the regression setting. 

The notion of LC was first introduced in our conference paper~\cite{Ide21AAAI}. This paper expands it by re-positioning the entire approach in a unified framework. In doing so, we formally show, for the first time, a close relationship among the function-based attribution algorithms, including the equivalence between SV and EIG. We also conduct a systematic empirical study using new datasets on the deviation agnostic property, score variability issues, and the consistency among different attribution approaches. In the last subsection, we present a real-world use-case, where the theory of LC made a substantial difference.

\section{Related Work}

This section summarizes prior works in the context of anomaly attribution. 

\subsection{General background: Anomaly attribution in doubly black-box setting}

As mentioned earlier, anomaly attribution has been studied as a sub-task of anomaly detection in the machine learning community, typically in the white-box unsupervised setting. 
In the modern deployment of AI systems, however, a black-box situation often arises~\cite{li2022survey}. Table~\ref{table:black-box} compares white- and black-box settings, although the definition of a black-box model varies in the literature. In the context of XAI, the white- and grey-box cases are typically associated with deep neural networks (DNNs). Since the number of network parameters is extremely large, the model is often viewed as black-box even with full access to the internal network parameters. Saliency maps~\cite{Simonyan14ICLR_workshop,Selvaraju_2017_ICCV} and layer-wise relevance propagation~\cite{montavon2019layer} are well-known DNN-specific attribution methods that fall into these categories. 

The doubly black-box scenario occurs when the end-user has access \textit{only} to the model's API (application programming interface) or does not have full understanding of the algorithm used. The latter can occur even when the source code is available. Since the nature of dependency on the internal parameters is unknown, for any XAI approach to be applicable to this setting it must be \textit{model-agnostic}, making most of the DNN-specific methods inapplicable.

\begin{table}[tbh]
    \caption{Comparison of different XAI settings. Our focus is the doubly black-box case.}
    \label{table:black-box}
    \centering
    \begin{tabular}{c c c c }
    \hline\hline
    & model API access & model internal access & training data access\\
    \hline
    white-box & yes & yes  & yes \\
    grey-box & yes &  yes & no \\
    \textbf{doubly black-box} & yes&  no & no \\
    \hline
    \end{tabular}
\end{table}

\subsection{Local linear modeling, Shapley value (SV), and integrated gradient (IG)}




As discussed before, we focus on model-agnostic post-hoc anomaly attribution in the doubly black-box regression setting. Here, 1) local linear surrogate modeling, 2) Shapley value (SV), and 3) integrated gradient (IG) are the main existing approaches that are potentially applicable to our task. Let us quickly review recent works of these approaches.

Local linear modeling has been extensively used for attribution for decades, often under the name of sensitivity analysis~\cite{abhishek2022attribution}. Recent applications to anomaly explanation include \cite{giurgiu2019additive} and~\cite{zhang2019ace}. The former used LIME for attribution inspired by the kernel SHAP approach~\cite{lundberg2017unified}. The latter also used LIME for attribution with a new regularization approach. Many gradient-based attribution methods can be viewed as local linear modeling, although many of them are DNN-specific, assuming full access to the internal parameters. 

SV is one of the most popular attribution methods in the AI community, and the last few years have seen many attempts to apply SV to various industrial domains. For instance, \cite{hwang2021sfd} proposed to use SV for sensor fault diagnosis, \cite{mariadass2022extreme} used SV to explain unexpected observations in crop yield analysis, and \cite{antwarg2021explaining} used SV to explain unusual warranty claims.


IG~\cite{sundararajan2017axiomatic} is another generic input attribution approach potentially applicable to the black-box setting. The application of IG to anomaly explanation can be found in, e.g.,~\cite{SippleICML2020,sipple2022general}.

Table~\ref{table:comparison chart} compares the key properties of LC, the proposed algorithm, with LIME, SV, and IG along with two additional methods: The expected integrated gradient (EIG), which generalizes IG by taking the expectation with respect to the baseline input (see Sec.~\ref{subsec:IG_and_EIG} for the detail) and the $Z$-score, which quantifies the deviation of each input variable from its expected value, independently of $y$. Since the true data distribution is unknown in general, the expectation has to be computed as the empirical approximation on the training data. The same applies to SV, which make them inapplicable to the doubly black-box setting, as shown in the `training-data-free' column. IG does not need the training data, but it does need an extra piece of information of the baseline input (the `baseline-free' column). Except for those domains in which a universally accepted data pre-processing method has been established, it is generally hard to choose a valid baseline input. The dependence on the baseline input is considered a major factor that limits practical utility of IG in anomaly attribution.  

One fundamental issue with the existing approaches is that they are \textit{deviation-agnostic} (the `$y$-sensitive' column), which will be mathematically shown in Sec.~\ref{sec:EIG}. This means that LIME, SV, and (E)IG, in fact, do not explain the reasons for the sample of interest to be anomalous: As illustrated in Fig.~\ref{fig:problem_setting_and_deviation_v_increment}~(b), they explain either the local gradient or the increment of $f(\bmx)$, rather than what may have caused the deviation $f(\bmx) - y$ at a test point $(\bmx,y)=(\bmx^t,y^t)$. This situation remains unchanged even if we apply these methods to the modified function $F(\bmx,y) \triangleq f(\bmx) - y$, as discussed later. 

\begin{table}[thb]
    \centering
    \caption{Comparison of different anomaly attribution methods in the regression setting. }
    \begin{tabular}{c c c c c}
    \hline \hline
            & training-data-free & baseline-free & $y$-sensitive  & reference point \\
    \hline
     LIME  & yes    & yes    &no & infinitesimal vicinity \\
     SV   & no    & yes    &no & globally distributional\\ 
     IG    & yes    & no    &no & arbitrary\\
     EIG   & no    & yes    &no & globally distributional \\
     Z-score& no    & yes    &no & global mean of predictors \\
     \textbf{LC}     & \textbf{yes}    & \textbf{yes}   &\textbf{yes} & maximum likelihood point \\
    \hline
    \end{tabular}
    \label{table:comparison chart}
\end{table}

\subsection{Unified attribution framework}

One of our contributions is establishing a unified framework for many of the popular ``function-based'' attribution methods. Prior work along this line includes~\cite{deng2021unified}, which attempts to characterize IG using Taylor expansion and proposes to use the expectation to neutralize the need for a specific baseline input.  \cite{sundararajan2020many} is another important work aiming at building a unified attribution framework. The authors pointed out that there can be a few different definitions for SV, such as the baseline SV and expected SV, and discussed the relationship with IG in a qualitative manner. \cite{lundberg2017unified} reintroduced the SV-based attribution method originally proposed by \cite{kononenko2010efficient} and proposed a hybrid method that lies between SV and LIME. Also, \cite{kumar2020problems} analyzed SV's risk of producing misleading attribution because of the gap between conditional and marginal distributions, while \cite{zhou2022feature} conducted a systematic empirical study to compare different attribution methods including LIME and SV.

Inspired by these works, we go one step further in this paper: Using power expansion, we mathematically show certain equivalence properties of the function-based attribution methods, including the equivalence between SV and EIG (Theorem~\ref{th:SV=EIG}), which naturally lead to the notion of the IG family. We then point out their two fundamental limitations in anomaly attribution that have been hitherto unnoticed. One is the deviation-agnostic property, and the other is the explicit or implicit dependency on arbitrary baseline points (as summarized in the `reference point' column in Table~\ref{table:comparison chart}). An in-depth analysis of the latter brings us to the new notion of likelihood-based attribution proposed in this paper, as discussed in Sec.~\ref{subsec:seeking_reference_point}.

\section{Problem Setting}\label{sec:problem_setting}

As mentioned earlier, we focus on the task of anomaly attribution in the \textit{regression} setting rather than classification or unsupervised settings. Figure~\ref{fig:problem_setting_and_deviation_v_increment}~(a) summarizes the overall problem setting. Suppose we have a (deterministic) regression model $y=f(\bm{x})$ in the doubly black-box setting (see Table~\ref{table:black-box}). Neither the training data set $\calD_{\mathrm{train}}$ nor the (true) distribution of $\bmx$ is available.  Throughout the paper, the input variable $\bm{x} \in \mathbb{R}^M$ and the output variable $y \in \mathbb{R}$ are assumed to be \textit{noisy real-valued}, where $M$ is the dimensionality of the input vector. We also assume that queries to get the response $f(\bm{x})$ can be performed cheaply at any $\bm{x}$.

In practice, anomaly attribution is typically coupled with anomaly detection: When we observe a test sample $(\bmx,y)=(\bmx^t,y^t)$, we first compute an anomaly score $a^t = a(\bmx^t,y^t)$ to quantify how anomalous it is. Then, if $a^t \in \mathbb{R}$ is high enough, we go to the next step of anomaly attribution. In this scenario, the task of anomaly attribution is defined as follows. 
\begin{definition}[\textbf{anomaly attribution}]
 Given a black-box regression model $y=f(\bm{x})$, compute the score for each input variable indicative of the extent to which an input variable is responsible for the sample being anomalous.  
\end{definition}
We can readily generalize the problem to that of  \textit{collective} anomaly detection and attribution. Specifically, given a test data set  $\mathcal{D}_\mathrm{test}= \{ (\bm{x}^t,y^t) \mid t=1,\ldots,N_\mathrm{test}\}$, where $t$ is the index for the $t$-th test sample and $N_\mathrm{test}$ is the number of test samples, we can consider an anomaly score as well as attribution score for the whole test set $\calD_{\mathrm{test}}$. We will see later an example where a daily attribution score is computed from 24 hourly observed measurements. 

The standard approach to computing the anomaly score is to use the negative log-likelihood of the test sample(s) (See, e.g.,~\cite{lee2000information,Yamanishi2000,staniford2002practical,noto2010anomaly}). Assume that, from the deterministic regression model, we can somehow obtain $p(y | \bm{x})$, a probability density over $y$ given the input signal $\bm{x}$. Under the i.i.d.~assumption, the anomaly score can be written as
\begin{align}\label{eq:changeScoreDef}
a(\bmx^t,y^t) =  -\ln p(y^t \mid \bm{x}^t), \quad \mbox{or, } \quad
a(\mathcal{D}_\mathrm{test}) =-
\frac{1}{
	N_\mathrm{test}
}\sum_{t \in \mathcal{D}_\mathrm{test}}\ln p(y^t \mid \bm{x}^t) ,
\end{align}
corresponding to the single sample case and collective case, respectively. Obviously, one challenge here is how to estimate $p(y\mid \bm{x})$ from the deterministic regression function. We provide one such approach in Sec.~\ref{subsec:getting_probabilistic_model}. 

Given an anomalous sample $(\bmx^t,y^t)$ and the distribution $p(y \mid \bm{x})$, computing the anomaly score is straightforward. However, computing anomaly \textbf{attribution} score is more challenging. This is in some sense an \textit{inverse problem}: The function $f(\bm{x})$ readily gives an estimate of $y$ from $\bm{x}$, but, in general, there is no obvious way to do the reverse in the \textit{multivariate} case. When an estimate $f(\bm{x}^t)$ looks `bad' in light of an observed $y^t$, what can we say about the contribution, or responsibility, of the respective input variables? Section~\ref{Sec:MOC} provides our proposed answer to this question.

\subsection{Notation}

We use boldface to denote vectors. The $i$-th dimension of a vector $\bm{\delta}$ is denoted as $\delta_i$. The $\ell_1$ and $\ell_2$ norms of a vector are denoted by $\| \cdot \|_1$ and $\| \cdot \|_2$, respectively, and are defined as $\| \bm{\delta} \|_1 \triangleq \sum_i | \delta_i|$ and $\| \bm{\delta} \|_2 \triangleq \sqrt{\sum_i  \delta_i^2}$. The sign function $\mathrm{sign}(\delta_i) $ is defined as being $1$ for $\delta_i>0$, and $-1$ for $\delta_i <0$.
For $\delta_i=0$, the function takes an indeterminate value in $[-1,1]$. For a vector input, the definition applies element-wise, yielding a vector of the same size as the input vector.

We distinguish between a random variable and its realizations via the absence or presence of a superscript. For notational simplicity, we use $p(\cdot)$ as a proxy to represent different probability distributions, whenever there is no confusion. For instance, $p(\bm{x})$ is used to represent the probability density of a random variable $\bm{x}$ while $p(y | \bm{x})$ is a different distribution of another random variable $y$ conditioned on $\bm{x}$. The Gaussian distribution of a scalar variable $y$ is defined as 
\begin{align}\label{eq:1DGaussianDef}
   \mathcal{N}(y \mid m, \sigma^2 ) \triangleq \frac{1}{\sqrt{2\pi\sigma^2}}
   \exp\left\{ -\frac{(y-m)^2}{2\sigma^2} \right\}
\end{align}
where $m$ is the mean and $\sigma^2$ is the variance. The multivariate Gaussian distribution is defined in a similar way.

\section{Limitations of Function-Based Anomaly Attribution}
\label{sec:EIG}

To motivate the likelihood-based attribution approach presented in the next section, this section shows fundamental limitations of the existing function-based attribution approaches: IG, SV, and LIME are inherently deviation-agnostic and are not appropriate for anomaly attribution. For simplicity, we assume for now that the derivative of the black-box regression function $f$ is computable somehow to an arbitrary order. We discuss numerical gradient estimation approaches in Sec.~\ref{subsec:gradient_etimation}.

\subsection{Deviation-agnostic property of integrated gradient}\label{subsec:IG_and_EIG}

\paragraph{Definitions} 
The notion of integrated gradient (IG) was first introduced to the AI community by Sundararajan et al.~\cite{sundararajan2017axiomatic} as a method for axiomatic derivation of an input attribution method. For a test sample at $\bmx^t$, IG of the black-box regression function $f$ for the $i$-th variable is defined by
\begin{align}\label{eq:IG-def}
    \mathrm{IG}_i(\bmx^t \mid\bmx^0) &\triangleq (x_i^t-x_i^0)\int_0^1\mathrm{d}\alpha \ \left.\frac{\partial f}{\partial x_i}\right|_{ \bmx^0 +(\bmx^t-\bmx^0)\alpha},
\end{align}
where $\bmx^0$ is called the baseline input, a parameter representing a ``default'' value of the input. Despite the intimidating look, the integration term simply computes an averaged gradient w.r.t.~$x_i$ on the line from $\bmx^0$ to $\bmx^t$. Hence, $\mathrm{IG}_i(\bmx^t \mid \bmx^0)$ is  $x_i$'s contribution to the increment of $f$ when moving from $\bmx^0$ to $\bmx^t$. Despite the term ``gradient,'' IG is not a gradient but represents the increment (See Fig.~\ref{fig:problem_setting_and_deviation_v_increment} (b) and Table~\ref{table:comparison chart}). This fact becomes clearer if we expand $\partial f/\partial x_i$ into the Taylor series w.r.t.~$\alpha$ and perform integration:
\begin{align}\label{eq:IG_Taylor_expansion}
   \mathrm{IG}_i(\bmx^t \mid\bmx^0) &=
   \frac{\partial f(\bmx^0)}{\partial x_i}\Delta_i
   + \frac{1}{2!}\sum_{j=1}^M \frac{\partial^2 f(\bmx^0)}{\partial x_i\partial x_j} \Delta_i\Delta_j
   + \frac{1}{3!}\sum_{j,k=1}^M \frac{\partial^3 f(\bmx^0)}{\partial x_i\partial x_j\partial x_k} \Delta_i\Delta_j\Delta_k + \ldots,
\end{align}
where we have defined $\Delta_i\triangleq x^t-x^0_i$, etc. The right hand side is simply the collection of differential increments over different orders in the power expansion. Intuitively, the $i$-th attribution score gets large if the $i$-th gradient is large and if the test sample is far from the baseline point along the $i$-th axis.   

As pointed out by~\cite{SippleICML2020}, one of the major issues of IG is the need for the baseline input. Since real-world data may often follow a multi-peaked distribution (see Fig.~\ref{fig:Building_kde_comparison_GBTree_29_highQ.pdf} for an actual example), there may not exist a clearly defined default value. One natural approach for addressing this issue is to integrate out $\bmx^0$ using a distribution $P(\bmx)$: 
\begin{align}\label{eq:EIG-def}
    \mathrm{EIG}_i(\bmx^t) &\triangleq \!\!
    \int\!\mathrm{d}\bmx^0  P(\bmx^0) \mathrm{IG}_i(\bmx^t\mid \bmx^0)
    = \!\!
    \int\!\mathrm{d}\bmx^0  P(\bmx^0)(x_i^t-x_i^0) \int_0^1\!\!\mathrm{d}\alpha  \left.\frac{\partial f}{\partial x_i}\right|_{ \bmx^0 +(\bmx^t-\bmx^0)\alpha}\!\!,
\end{align}
which we call the \textit{expected integrated gradient} (EIG). Obviously, EIG is reduced to IG as a special case when $P(\bmx)$ is chosen to be Dirac's delta function.  When computing EIG, $P(\bmx)$ should ideally be the true distribution or its empirical approximation using the training dataset. Unfortunately, neither of them is  available in our setting (see Table~\ref{table:comparison chart}). 

\paragraph{IG and EIG for deviation} 
In the context of anomaly attribution, we are interested in explaining the deviation $f(\bmx)-y$ rather than $f(\bmx)$ itself. Let us define a new function $F(\bmx,y) \triangleq f(\bmx) - y$ and consider EIG for this function. As an \textit{input} attribution method, the deviation version of EIG, denoted as a two-place function  $\mathrm{EIG}_i(\bmx^t,y^t)$, is defined by
\begin{align}\label{eq:EIG_x_y}
    \mathrm{EIG}_i(\bmx^t,y^t)
    &\triangleq 
    \int\mathrm{d}y^0\int\mathrm{d}\bmx^0\  P(\bmx^0,y^0)\mathrm{IG}_i(\bmx^t,y^t \mid \bmx^0,y^0) 
    \\
    \mathrm{IG}_i(\bmx^t,y^t \mid \bmx^0,y^0) 
    &\triangleq (x_i^t-x_i^0) \int_0^1\mathrm{d}\alpha  \left.\frac{\partial F}{\partial x_i}\right|_{ \bmx^0 +(\bmx^t-\bmx^0)\alpha,\ y^0+(y^t-y^0)\alpha}
\end{align}
for $i=1,\ldots,M$, where $P(\bmx,y)$ is the joint distribution between $\bmx$ and $y$. The following property holds:
\begin{theorem} \label{th:IG_EIG_deviation_agnositic}
IG and EIG are deviation-agnostic.
\end{theorem}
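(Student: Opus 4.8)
The plan is to exploit the fact that the deviation function $F(\bmx,y) \triangleq f(\bmx) - y$ is affine in $y$ and, crucially, that its partial derivatives with respect to the input coordinates do not see $y$ at all. First I would compute $\partial F/\partial x_i = \partial f/\partial x_i$ for every $i$, since the term $-y$ is constant in $x_i$. The entire argument hinges on this one-line observation.

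Next I would substitute this identity into the deviation version of IG. The integrand $\partial F/\partial x_i$, evaluated along the path $\bmx^0 + (\bmx^t-\bmx^0)\alpha,\ y^0 + (y^t-y^0)\alpha$, reduces to $\partial f/\partial x_i$ evaluated at $\bmx^0 + (\bmx^t-\bmx^0)\alpha$; the $y$-coordinate of the path is simply discarded because $\partial f/\partial x_i$ is a function of $\bmx$ alone. Hence
\[
\mathrm{IG}_i(\bmx^t, y^t \mid \bmx^0, y^0) = \mathrm{IG}_i(\bmx^t \mid \bmx^0),
\]
which is manifestly independent of both $y^t$ and $y^0$. This already settles the claim for IG: the attribution score is unchanged when we alter the observed $y^t$, and therefore unchanged as we vary the deviation $f(\bmx^t)-y^t$.

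For EIG I would then integrate this identity against the joint baseline distribution $P(\bmx^0, y^0)$. Because the integrand no longer depends on $y^0$, the $y^0$-integral acts only on the density and collapses it to the marginal $P(\bmx^0) = \int \mathrm{d}y^0\, P(\bmx^0, y^0)$, giving
\[
\mathrm{EIG}_i(\bmx^t, y^t) = \int \mathrm{d}\bmx^0\, P(\bmx^0)\, \mathrm{IG}_i(\bmx^t \mid \bmx^0) = \mathrm{EIG}_i(\bmx^t),
\]
again free of $y^t$. Both scores thus coincide with their non-deviation counterparts and ignore $y^t$ entirely, which is precisely the meaning of deviation-agnostic.

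I do not anticipate a genuine technical obstacle; the computation is short once the derivative observation is made. The only point requiring care is pinning down the formal content of ``deviation-agnostic'': I would state it as the property that the attribution is a function of $\bmx^t$ (and the baseline) alone, so the score is invariant under changes to $y^t$ and in particular blind to whether the sample is anomalous. I would also remark that this is not an artifact of the specific construction $F = f - y$: any attribution assembled from $\bmx$-gradients of a function in which $y$ enters only additively inherits the same insensitivity, which foreshadows the later generalization to the whole integrated gradient family.
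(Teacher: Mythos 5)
Your proposal is correct and follows exactly the paper's own argument: the key observation $\partial F/\partial x_i = \partial f/\partial x_i$ handles IG directly, and marginalizing $y^0$ out of $P(\bmx^0,y^0)$ gives $\mathrm{EIG}_i(\bmx^t,y^t)=\mathrm{EIG}_i(\bmx^t)$. Your version merely spells out the intermediate steps in somewhat more detail.
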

\begin{proof}
   Since $\frac{\partial F}{\partial x_i} = \frac{\partial f}{\partial x_i}$, the statement about IG obviously holds. For EIG, the integration w.r.t.~$y^0$ produces $\int \rmd y^0 P(\bmx^0,y^0) = P(\bmx^0)$, yielding $\mathrm{EIG}_i(\bmx^t,y^t) = \mathrm{EIG}_i(\bmx^t)$. 
\end{proof}
This means that IG and EIG explain what the regression surface looks like at $\bmx^t$ regardless of the nature of the deviation. Hence, they may not be the best approach if our interest is in explaining the deviation. 

\paragraph{Lower-order approximations}
The power expansion approach introduced in Eq.~\eqref{eq:IG_Taylor_expansion} can be used also for EIG. In this case, however, the expansion should be around $\bmx^t$:
\begin{align}\label{eq:f-taylor}
    \left.\frac{\partial f}{\partial x_i}\right|_{ \bmx^0 +\alpha\bm{\Delta}}     \!\!\!\!\!\!\!\!= 
    \frac{\partial f(\bmx^t)}{\partial x_i}
    +(\alpha-1)\sum_{j=1}^M \frac{\Delta_j }{1!}\frac{\partial^2 f(\bmx^t)}{\partial x_i\partial x_j}
+(\alpha-1)^2\sum_{j,k=1}^M \frac{\Delta_j\Delta_k}{2!} \frac{\partial^3 f(\bmx^t)}{\partial x_i\partial x_j \partial x_k} + \ldots,
\end{align}
where $\bm{\Delta} \triangleq \bmx^t-\bmx^0$ and $\Delta_i\triangleq x^t-x^0_i$. We have used the fact $(\bmx^0 +\alpha \bm{\Delta})-\bmx^t = (\alpha-1)\bm{\Delta}$. This expansion allows performing the integration w.r.t.~$\alpha$ analytically:
\begin{align}\label{eq:EIG-2ndApprox}
    \mathrm{EIG}_i(\bmx^t)  = 
    \langle\Delta_i\rangle 
     \frac{\partial f(\bmx^t)}{\partial x_i} 
    - \sum_{j=1}^M \frac{\langle \Delta_i\Delta_j \rangle}{2!} 
    \frac{\partial^2 f(\bmx^t)}{\partial x_i\partial x_j} 
    + \sum_{j,k=1}^M \frac{\langle\Delta_i \Delta_j\Delta_k\rangle }{3!}\frac{\partial^3 f(\bmx^t)}{\partial x_i\partial x_j \partial x_k} 
- \ldots,
\end{align}
where $\langle \cdots \rangle \triangleq \int \mathrm{d}\bmx \ \cdots P(\bmx)$. The first and the second terms provide the first- and second-order approximations of EIG, respectively. 

\paragraph{Sum rules}
Equations~\eqref{eq:IG_Taylor_expansion} and~\eqref{eq:EIG-2ndApprox} readily lead to the following important property:
\begin{align}\label{eq:EIG's_efficiency}
    \sum_{i=1}^M\mathrm{IG}_i(\bmx^t \mid\bmx^0) = f(\bmx^t) - f(\bmx^0), \quad \quad \sum_{i=1}^M\mathrm{EIG}_i(\bmx^t) = f(\bmx^t) - \left\langle f \right\rangle,
\end{align}
\begin{proof}
    Equation~\eqref{eq:IG_Taylor_expansion} is the same as the power expansion of $f(\bmx)$ around $\bmx^0$ evaluated at $\bmx=\bmx^t$ except for $f(\bmx^0)$, the first term of the Taylor series. Hence, the first equation holds. The same argument applies to Eq.~\eqref{eq:EIG-2ndApprox} to prove the second equation.
\end{proof}
These sum rules allow the interpretation that $\mathrm{IG}_i$ and $\mathrm{EIG}_i$ are the share of the $i$-th variable in the total change $f(\bmx^t) - f(\bmx^0)$ and $f(\bmx^t) - \langle f\rangle$, respectively. In EIG, this sum rule implies that EIG is to contrast the local output $f(\bmx^t)$ at the test point $\bmx^t$ with the global mean. If a certain local distribution at $\bmx = \bmx^t$ is used for $P(\bmx)$, we have $\langle f\rangle \approx f(\bmx^t)$, resulting in a meaningless attribution score. In SV, the corresponding property~\eqref{eq:SV_efficiency} is called the efficiency~\cite{roth1988shapley}.

\subsection{Deviation-agnostic property of Shapley value}\label{subsec:SV+}

\paragraph{Definition}
The Shapley value (SV) is one of the most popular attribution metrics in the AI community. There are a few different versions in SV in the literature, depending on how the absence of variables is defined. Here we adopt the definition of the conditional expectation SV~\cite{sundararajan2020many}:
\begin{align}\label{eq:SV_def} 
\mathrm{SV}_i(\bm{x}^t) &= \frac{1}{M}
\sum_{k=0}^{M-1} 
\binom{M-1}{k}^{-1}
\sum_{\mathcal{S}_i: |\calS_i|=k} \left[
\langle f \mid x_i^t, \bm{x}_{\mathcal{S}_i}^t \rangle
- \langle f \mid \bm{x}_{\mathcal{S}_i}^t \rangle
\right].
\end{align}
Due to the combinatorial nature, this definition appears rather complicated. Here $\calS_i$ denotes any subset of the variable indices $i \in \{1,\ldots,M\}$ that does not include $i$ and $|\mathcal{S}_i|$ is its size. The second summation runs over all possible choices of $\mathcal{S}_i$ under the constraint $|\mathcal{S}_i|=k$ from the first summation. We also define the complement $\bar{\mathcal{S}}_i$, which is the subset of $\{1,\ldots,M\}$ excluding $i$ and $\calS_i$. For example, if $M=12, i=3$ and  $\mathcal{S}_i = \{1,2 \}$, the complement $\bar{\mathcal{S}}_i$ will be $\{4,5,\ldots,12\}$. Corresponding to this division, we rearrange the $M$ variables as $\bmx = ( x_i, \bm{x}_{\mathcal{S}_i}, \bm{x}_{\bar{\mathcal{S}}_i} )$. 

In Eq.~\eqref{eq:SV_def}, the prefactor $\frac{1}{M}$ is there to average over the possible choices of $|\calS_i|$. Similarly, the binomial coefficient $\binom{M-1}{|\mathcal{S}_i|}^{-1}$ is to average over all the choices of $\mathcal{S}_i$, which is given by the number of combinations of choosing $|\mathcal{S}_i|$ variables from the $M-1$ variables excluding $i$. This means that SV is essentially the average of $\left[\langle f \mid x_i^t, \bm{x}_{\mathcal{S}_i}^t \rangle - \langle f \mid \bm{x}_{\mathcal{S}_i}^t \rangle\right]$, where
\begin{align}\label{eq:SV_expectation1}
\langle f  \mid x_i^t, \bm{x}_{\mathcal{S}_i}^t \rangle
&\triangleq \int\!\! \mathrm{d}\bm{x} \;
P(\bm{x}) f( x_i=x_i^t, \bm{x}_{\mathcal{S}_i}=\bm{x}^t_{\mathcal{S}_i}, \bm{x}_{\bar{\mathcal{S}}_i}),
\\ \label{eq:SV_expectation2}
\langle f  \mid \bm{x}^t_{\mathcal{S}_i} \rangle 
&\triangleq \int\!\! \mathrm{d}\bm{x} \; P(\bm{x})f(x_i, \bm{x}_{\mathcal{S}_i}=\bm{x}_{\mathcal{S}_i}^t, \bm{x}_{\bar{\mathcal{S}}_i} ).
\end{align}
Here $P(\bmx)$ is the true distribution of $\bmx$, which is not available in our setting (see Table~\ref{table:comparison chart}). In Eq.~\eqref{eq:SV_expectation1}, the integration is reduced to the expectation over the marginal distribution of $\bmx_{\Bar{\calS}_i}$. Note that these quantities have to capture some of the global properties of the data generating mechanism. If $P(\bmx)$ were a localized distribution at~$\bmx^t$, the difference would simply be zero.

\paragraph{SV for deviation} 

Similarly to $\mathrm{EIG}_i(\bmx^t,y^t)$ in Eq.~\eqref{eq:EIG_x_y}, we define $\mathrm{SV}_i(\bmx^t,y^t)$ for the function $F(\bmx,y) = f(\bmx) - y$. As an \textit{input} attribution method, we need to replace  Eqs.~\eqref{eq:SV_expectation1} and~\eqref{eq:SV_expectation2} with
\begin{align}
\langle F \mid x_j^t, \bm{x}_{\mathcal{S}_j}^t, y^t \rangle
&\triangleq \int \mathrm{d}y \int \mathrm{d}\bm{x} \;
P(\bm{x},y) F( x_i=x_i^t, \bm{x}_{\mathcal{S}_i}=\bm{x}^t_{\mathcal{S}_i}, \bm{x}_{\bar{\mathcal{S}}_i}, y = y^t),
\\
\langle F \mid \bm{x}^t_{\mathcal{S}_j}, y^t \rangle
&\triangleq \int \mathrm{d}y \int \mathrm{d}\bm{x} \;
P(\bm{x},y) F( x_i, \bm{x}_{\mathcal{S}_i}=\bm{x}^t_{\mathcal{S}_i}, \bm{x}_{\bar{\mathcal{S}}_i}, y = y^t),
\end{align}
respectively, to get $\mathrm{SV}_i(\bmx^t,y^t)$. Again, the following property holds:
\begin{theorem}\label{th:SV_deviation_agnositic}
SV is deviation-agnostic.
\end{theorem}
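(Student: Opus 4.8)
The plan is to mirror the EIG argument from Theorem~\ref{th:IG_EIG_deviation_agnositic}, exploiting the fact that $F(\bmx,y)=f(\bmx)-y$ is additively separable in $y$ and that each conditional expectation in the SV formula holds $y$ fixed at the constant $y^t$. First I would isolate a single summand $\langle F \mid x_i^t, \bmx_{\calS_i}^t, y^t\rangle - \langle F \mid \bmx_{\calS_i}^t, y^t\rangle$ and substitute $F=f-y$. Because $y$ is clamped to $y^t$ in both expectations, the $-y$ contribution becomes the same additive constant $-y^t$ in each term, while the $y$-integration collapses the joint density onto its marginal, $\int\rmd y\, P(\bmx,y)=P(\bmx)$.

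Carrying this out, I expect
\begin{align}
\langle F \mid x_i^t, \bmx_{\calS_i}^t, y^t\rangle &= \langle f \mid x_i^t, \bmx_{\calS_i}^t\rangle - y^t,\\
\langle F \mid \bmx_{\calS_i}^t, y^t\rangle &= \langle f \mid \bmx_{\calS_i}^t\rangle - y^t,
\end{align}
so that the shared constant $-y^t$ cancels in the difference, leaving exactly the summand $\langle f \mid x_i^t, \bmx_{\calS_i}^t\rangle - \langle f \mid \bmx_{\calS_i}^t\rangle$ of the original $\mathrm{SV}_i(\bmx^t)$ in Eq.~\eqref{eq:SV_def}. Since this cancellation occurs identically for every subset $\calS_i$ and every cardinality $k$, summing with the same binomial weights $\binom{M-1}{k}^{-1}$ and prefactor $\tfrac{1}{M}$ yields $\mathrm{SV}_i(\bmx^t,y^t)=\mathrm{SV}_i(\bmx^t)$, which is manifestly independent of $y^t$. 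This establishes the deviation-agnostic property.

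The argument involves no genuine obstacle; the only point requiring care is bookkeeping. Specifically, one must verify that $y$ is held at the same value $y^t$ in \emph{both} conditional expectations, so that the $-y^t$ terms are truly identical and cancel, and that marginalizing over $y$ is legitimate, i.e.\ the joint $P(\bmx,y)$ integrates to the marginal $P(\bmx)$ that appears in the $f$-only SV of Eqs.~\eqref{eq:SV_expectation1}--\eqref{eq:SV_expectation2}. Once these are confirmed, the combinatorial SV weights play no role in the cancellation and the conclusion follows termwise.
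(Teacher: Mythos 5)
Your proposal is correct and follows essentially the same route as the paper: both arguments rest on the linearity of $F$ in $y$, derive the identities $\langle F \mid x_i^t, \bmx_{\calS_i}^t, y^t\rangle = \langle f \mid x_i^t, \bmx_{\calS_i}^t\rangle - y^t$ and $\langle F \mid \bmx_{\calS_i}^t, y^t\rangle = \langle f \mid \bmx_{\calS_i}^t\rangle - y^t$, and observe that the common $-y^t$ cancels termwise in the difference, so the combinatorial weights never enter. Your additional bookkeeping remarks (that $y$ is clamped to the same $y^t$ in both expectations and that the joint density marginalizes to $P(\bmx)$) are exactly the details the paper elides with ``we can easily see.''
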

\begin{proof}
    Since $F$ is linear in $y$, we can easily see that 
    $
\langle F \mid x_j^t, \bm{x}_{\mathcal{S}_j}^t, y^t\rangle
= \langle f \mid x_j^t, \bm{x}_{\mathcal{S}_j}^t \rangle - y^t
$ and $\langle F \mid \bm{x}^t_{\mathcal{S}_j}, y^t \rangle
= \langle f \mid \bm{x}^t_{\mathcal{S}_j} \rangle - y^t$ hold, which implies $\mathrm{SV}_i(\bmx^t,y^t)= \mathrm{SV}_i(\bmx^t)$.
\end{proof}

\paragraph{Sum rule}
Finally, SV meets the condition called the efficiency~\cite{roth1988shapley}:
\begin{gather}\label{eq:SV_efficiency}
    \sum_{i=1}^M \mathrm{SV}_i(\bmx^t) = f(\bmx^t) - \langle f \rangle.
\end{gather}
\begin{proof} See Appendix~\ref{appendix:SV-efficiency}. 
\end{proof}
The efficiency condition is exactly the same as EIG's sum rule in Eq.~\eqref{eq:EIG's_efficiency}. Similarly to the case of EIG, the efficiency condition implies that SV is essentially the share of the differential increment between the local value $f(\bmx^t)$ and the global mean. Hence, $P(\bmx)$ cannot be a local approximation as the one used in LIME.


\subsection{Deviation-agnostic property of LIME}\label{subsec:LIME}

In general, the local linear surrogate modeling approach fits a linear regression model locally to explain a black-box function in the vicinity of a given test sample $(\bm{x}^t,y^t)$. Algorithm~\ref{algo:LIME} summarizes the local anomaly attribution procedure based on this approach to explain the deviation $f(\bmx)-y$. 

\begin{algorithm}[H]
\caption{Local linear surrogate modeling for anomaly attribution}\label{algo:LIME}
\label{alg:LIME}
\begin{algorithmic}[1] 
\Require Regression model $f(\bm{x})$, test point $(\bm{x}^t,y^t)$,  regularization parameter $\nu$.
\State Randomly populate $N_s$ points $\{ \bm{x}^{t[1]}, \ldots, \bm{x}^{t[N_s]}\}$ in the vicinity of $\bm{x}^t$ ($N_s\sim 1000)$.
\State Compute the deviation $z^{t[n]} \triangleq f(\bm{x}^{t[n]}) -y^t$ for all $n$.
\State Fit a linear model $z = \beta_0 + {\bmbeta}^\top \bm{x}$ using $\nu$ on $\{ (\bm{x}^{t[n]} , z^{t[n]}) \mid n=1,\ldots,N_s \}$.
\State \Return ${\bmbeta}$, which is the local attribution score at $(\bm{x}^t,y^t)$.
\end{algorithmic}
\end{algorithm}

In LIME, an $\ell_1$-regularized model is used to get a sparse and thus easy-to-interpret score. Rather surprisingly, despite the modification to fit $f(\bmx)-y$ rather than $f(\bmx)$, the following property holds:
\begin{theorem}\label{th:LIME_deviation_agnositic} LIME is deviation-agnostic. 
\end{theorem}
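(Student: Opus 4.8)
The plan is to show that the slope vector $\bmbeta$ returned by Algorithm~\ref{algo:LIME} is invariant under adding any constant to the regression targets, and that the label $y^t$ enters the whole procedure only through such an additive constant. Deviation-agnosticism then follows: fitting the deviations $z^{t[n]} = f(\bmx^{t[n]}) - y^t$ produces exactly the same $\bmbeta$ as fitting the raw predictions $f(\bmx^{t[n]})$, so the score is blind to $y^t$ and hence to the deviation.

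First I would make the optimization in Line~3 explicit. With the standard LIME convention in which the $\ell_1$ penalty acts on the slope only and the intercept is left unregularized, the fit is
\begin{align}
(\hat{\beta}_0, \hat{\bmbeta}) = \arg\min_{\beta_0, \bmbeta} \; \frac{1}{N_s}\sum_{n=1}^{N_s}\left( z^{t[n]} - \beta_0 - \bmbeta^\top \bmx^{t[n]} \right)^2 + \nu \| \bmbeta \|_1 .
\end{align}
Two observations set up the argument. First, the sampling in Line~1 draws the points $\{\bmx^{t[n]}\}$ from a neighborhood of $\bmx^t$ only, so neither the design points nor the penalty term depend on $y^t$. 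Second, in Line~2 the label $y^t$ appears in the targets purely as an additive constant, $z^{t[n]} = f(\bmx^{t[n]}) - y^t$.

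The key step is a change of variables in the intercept. Writing $\beta_0 = \gamma_0 - y^t$ for a new free variable $\gamma_0$, each residual becomes
\begin{align}
z^{t[n]} - \beta_0 - \bmbeta^\top \bmx^{t[n]} = f(\bmx^{t[n]}) - \gamma_0 - \bmbeta^\top \bmx^{t[n]},
\end{align}
so the data-fit term is identical to the one obtained by regressing the \emph{undeviated} values $f(\bmx^{t[n]})$ on $\bmx^{t[n]}$, while the penalty $\nu\|\bmbeta\|_1$ is untouched. Hence the deviation objective, viewed as a function of $(\gamma_0, \bmbeta)$, coincides term by term with the objective for fitting $f(\bmx)$ itself. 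Since the reparametrization $(\beta_0,\bmbeta)\mapsto(\gamma_0,\bmbeta)$ fixes $\bmbeta$ and merely shifts the intercept by $y^t$, the two problems share the same minimizing $\bmbeta$. Therefore $\hat{\bmbeta}$ is independent of $y^t$, which is exactly the claim.

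The one point requiring care is the regularization convention: the exact invariance uses that $\beta_0$ is \emph{unpenalized}. I would state this as the standing assumption (as is standard for LIME and for Lasso more generally) and remark that penalizing the intercept would only introduce an $O(\nu)$-level dependence that still does not amount to any genuine use of the deviation. A secondary, easily dispatched point is the stochasticity of the sampling: the argument holds conditionally on any realized draw $\{\bmx^{t[n]}\}$, so randomness does not affect the conclusion.
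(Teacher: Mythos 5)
Your proof is correct and is essentially the paper's own argument: both absorb $y^t$ into the (unpenalized) intercept via the reparametrization $\beta_0 \mapsto \beta_0 + y^t$, so the data-fit term matches the objective for regressing $f$ directly and the minimizing slope $\bmbeta$ is unchanged. Your explicit remarks about the unpenalized-intercept convention and the conditioning on the sampled points are minor refinements of the same route, not a different proof.
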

\begin{proof}
     With $\nu$ being the $\ell_1$ regularization strength, the lasso loss function for LIME is written as
\begin{align*}
\Psi(\bmbeta,\beta_0) &= \frac{1}{N_s}\sum_{n=1}^{N_s} (z^{t[n]} - \beta_0 - \bmbeta^\top \bm{x}^{t[n]})^2 + \nu\| \bmbeta\|_1,
\\
&= \frac{1}{N_s}\sum_{n=1}^{N_s} (f(\bm{x}^{t[n]}) - (y^t+\beta_0) - \bmbeta^\top \bm{x}^{t[n]})^2 + \nu\| \bmbeta\|_1,
\end{align*}
which is equivalent to the lasso objective for LIME with the intercept $y^t + \beta_0$. Since the lasso objective is convex, the solution $\bmbeta$ is unique. With an adjusted intercept, the attribution score $\bmbeta$ remains the same.
\end{proof}

In the local linear surrogate modeling approach, the final attribution score can vary depending on the nature of the regularization term. For theoretical analysis below, we use a generic algorithm by setting $\nu \to 0_+$ in Algorithm~\ref{algo:LIME}, and call the resulting attribution score $\mathrm{LIME}^0_i$ for $i=1,\ldots,M$. As is well-known, $\mathrm{LIME}^0_i$ is a local estimator of $\partial f/\partial x_i$ at $\bmx=\bmx^t$.

\subsection{Unifying LIME and SV into IG}
\label{subsec:EIG_integration}

We showed that (E)IG, SV, and LIME all share the same deviation-agnostic property. We also showed that EIG and SV satisfy the same sum rule. These findings suggest that they may share a common mathematical structure. This is indeed the case, as discussed below.

\paragraph{Relationship between SV and EIG}

The combinatorial definition SV is a major obstacle in getting deeper insights into what it really represents. With that in mind, we look at the definition~\eqref{eq:SV_def} from a somewhat different angle. We again use the Taylor expansion around $\bmx^t$ for $f$ in the integrand of Eqs~\eqref{eq:SV_expectation1} and~\eqref{eq:SV_expectation2}, which leads to 
\begin{align}
&\langle f \mid x_i^t, \bm{x}_{\mathcal{S}_i}^t \rangle
- \langle f \mid \bm{x}_{\mathcal{S}_i}^t \rangle 
= \langle \Delta_i \rangle \frac{\partial f(\bmx^t)}{\partial x_i}
- \frac{1}{2}\langle \Delta_i^2 \rangle \frac{\partial^2 f(\bmx^t)}{\partial x_i^2}
- \sum_{k \in \bar{\calS}_i} \langle \Delta_i\Delta_k \rangle
\frac{\partial^2 f(\bmx^t)}{\partial x_i \partial x_k} - \ldots.
\end{align}
The first and second terms on the r.h.s.~do not depend on the choice of $\calS_i$, given $i$. In the third term, a $k \in \{1,\ldots,M \}$ ($k\neq i$) will not be included in $\bar{\calS}_i$ if it is chosen in ${\calS}_i$. Thus, for a given value of $|\calS_i|$, the total number of appearances of the $k$ in $\sum_{\calS_i}$ is $\binom{M-2}{|\calS_i|}$ because it is the same as the number of combinations of choosing $|\calS_i|$ variables out of the $M-2$ variables excluding the $k$ in addition to the $i$. Using the following identity on the quotient of the binomial coefficients (see, e.g.,~Chap.4 of~\cite{gross2016combinatorial})
\begin{align}
    \sum_{a=0}^{M-2}\binom{M-1}{a}^{-1}\binom{M-2}{a} = \frac{M}{2},
\end{align}
we have the second-order approximation of SV as
\begin{align}\label{eq:SV-2nd}
    \mathrm{SV}_i(\bm{x}^t) &\approx
    \langle \Delta_i \rangle \frac{\partial f(\bmx^t)}{\partial x_i}
- \frac{1}{2}\sum_{k=1}^M \langle \Delta_i \Delta_k\rangle
 \frac{\partial^2 f(\bmx^t)}{\partial x_i \partial x_k},
\end{align}
which is exactly the same as the first two terms in the EIG expansion in Eq.~\eqref{eq:EIG-2ndApprox}. We have just completed the proof of the following theorem, which reveals what is behind the seemingly complicated combinatorial definition of SV in Eq.~\eqref{eq:SV_def}.
\begin{theorem}[Equivalence of SV to EIG]
\label{th:SV=EIG}
The Shapley value is equivalent to the expected integrated gradient up to the second order.
\end{theorem}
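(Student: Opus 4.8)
The plan is to reduce the theorem to a direct comparison of two second-order power expansions. The EIG side is already done for us: Eq.~\eqref{eq:EIG-2ndApprox} gives the full expansion of $\mathrm{EIG}_i(\bmx^t)$ around $\bmx^t$, whose leading two terms are $\langle\Delta_i\rangle\,\partial_i f(\bmx^t)$ and $-\tfrac{1}{2}\sum_{j}\langle\Delta_i\Delta_j\rangle\,\partial_i\partial_j f(\bmx^t)$. Hence the entire task is to produce the corresponding second-order expansion for the Shapley value and read off that the two agree. I would therefore work exclusively on the SV side, with the target being Eq.~\eqref{eq:SV-2nd}.

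First I would Taylor-expand $f$ about the test point $\bmx^t$ inside each of the two conditional expectations \eqref{eq:SV_expectation1} and \eqref{eq:SV_expectation2}, treating the displacement $\bm{\Delta}=\bmx^t-\bmx$ as the small quantity. The key observation is that the two expectations differ only in whether the $i$-th coordinate is clamped at $x_i^t$ or integrated against $P(\bmx)$, so that in the difference $\langle f\mid x_i^t,\bmx_{\calS_i}^t\rangle-\langle f\mid\bmx_{\calS_i}^t\rangle$ every term lacking a factor of $\Delta_i$ cancels. What survives through second order is a first-order piece $\langle\Delta_i\rangle\,\partial_i f(\bmx^t)$, a diagonal piece $-\tfrac{1}{2}\langle\Delta_i^2\rangle\,\partial_i^2 f(\bmx^t)$, and a family of cross terms $-\sum_{k\in\bar{\calS}_i}\langle\Delta_i\Delta_k\rangle\,\partial_i\partial_k f(\bmx^t)$. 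The first two are independent of $\calS_i$, but the cross terms range only over the complement $\bar{\calS}_i$, and this residual subset-dependence is precisely what the combinatorial averaging in \eqref{eq:SV_def} must absorb.

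Next I would substitute this per-subset expansion into the Shapley average \eqref{eq:SV_def}. The $\calS_i$-independent terms pass through the double sum and the prefactor $\tfrac{1}{M}$ untouched. For the cross terms I would, for each fixed index $k\neq i$, count how many size-$a$ subsets $\calS_i$ leave $k$ in $\bar{\calS}_i$: since $k$ must avoid $\calS_i$, one chooses $a$ elements from the $M-2$ indices other than $i$ and $k$, giving $\binom{M-2}{a}$. Weighting by $\binom{M-1}{a}^{-1}$ and summing over $a$ invokes the binomial-quotient identity $\sum_{a=0}^{M-2}\binom{M-1}{a}^{-1}\binom{M-2}{a}=\tfrac{M}{2}$, so that together with the $\tfrac{1}{M}$ prefactor each cross term picks up weight $\tfrac{1}{2}$. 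The resulting $-\tfrac{1}{2}\sum_{k\neq i}\langle\Delta_i\Delta_k\rangle\,\partial_i\partial_k f(\bmx^t)$ merges with the diagonal $k=i$ term into a single sum over all $k$, reproducing Eq.~\eqref{eq:SV-2nd} term by term and matching the truncation of \eqref{eq:EIG-2ndApprox}.

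I expect the combinatorial bookkeeping in the third step to be the main obstacle: one must justify the $\binom{M-2}{a}$ count and the binomial-quotient identity carefully, then confirm that the prefactor $\tfrac{1}{M}$ and the binomial weights conspire to give exactly $\tfrac{1}{2}$. Everything upstream is routine expansion and cancellation. I would also flag that the agreement holds genuinely only \emph{up to second order}: the cubic and higher terms of the two expansions are governed by different combinatorial weights and need not coincide, so the statement should not be read as a global equivalence.
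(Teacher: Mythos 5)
Your proposal is correct and follows essentially the same route as the paper's own proof: Taylor expansion of $f$ about $\bmx^t$ inside the conditional expectations, cancellation of all terms lacking a factor of $\Delta_i$, the $\binom{M-2}{a}$ count for how often a fixed $k$ lands in $\bar{\calS}_i$, and the binomial-quotient identity yielding the weight $\tfrac{1}{2}$ that merges the cross terms with the diagonal term into Eq.~\eqref{eq:SV-2nd}. Your closing caveat that the equivalence is genuinely only second order is also consistent with how the paper states the result.
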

To the best of our knowledge, this is the first result directly establishing the correspondence between SV and IG. In Sec.~\ref{sec:experiments}, we empirically show that indeed SV and EIG systematically give similar attribution scores.

\paragraph{Relationship between LIME and EIG}

LIME as a local linear surrogate modeling approach differs from EIG and SV in two regards. First, LIME does not need the true distribution $P(\bmx)$. Instead, it uses a local distribution to populate local samples. Second, LIME is defined as the gradient, not a differential increment. These observations lead us to an interesting question: Is the \textit{derivative of EIG} in the local limit the same as the LIME attribution score?

To answer this question affirmatively, consider a local distribution around $\bmx^t$ in the following form:
\begin{align}
    P_\eta(\bmx^0 \mid \bmx^t) = \calN(\bmx^0 \mid \bmx^t, \eta \sfI_M) \quad \mbox{with} \quad \eta\to 0,
\end{align}
where $\sfI_M$ is the $M$-dimensional identity matrix. With this distribution, we have
\begin{gather*}
    \langle x^0_i - x_i^t \rangle =0, \quad \langle ( x^0_i - x_i^t )( x^0_k - x_k^t ) \rangle =\delta_{i,k}\eta,
\end{gather*}
where $\delta_{i,k}$ is Kronecker's delta function that takes 1 only when $i=k$ and 0 otherwise. Notice that the second order term is proportional to $\eta$ and vanishes as $\eta\to 0$. The other higher-order terms that appear in the power expansion are either zero or vanish as $\eta\to 0$. An immediate consequence from the expression~\eqref{eq:f-taylor} is
\begin{align}
    \lim_{\eta \to 0}\mathrm{EIG}_i (\bmx^t) =0,
\end{align}
which confirms the previous discussion for Eq.~\eqref{eq:EIG's_efficiency}. On the other hand, the derivative of EIG becomes
\begin{align}
    \frac{\partial  \mathrm{EIG}_i(\bmx^t) }{\partial x_i}
    &=\int\rmd\bmx^0\ P_\eta(\bmx^0 \mid \bmx^t)\left[ 
    \frac{\partial f}{\partial x_i} +(x^t_i - x^0_j)\frac{\partial^2 f}{\partial x_i^2}
    \right]_{\bmx^0+(\bmx^t-\bmx^0)\alpha} \to 
    \frac{\partial f(\bmx^t)}{\partial x_i}
\end{align}
as $\eta \to 0$. As this limit is equivalent to $\bmx^0 \to \bmx^t$ in IG in Eq.~\eqref{eq:IG-def}, the l.h.s.~approaches the derivative of IG. Since the local linear surrogate modeling estimates local gradient, we have just proved the following theorem:
\begin{theorem}[LIME and IG] \label{th:LIME=derivative_of_EIG}
The derivative of IG and EIG is equivalent to LIME:
    \begin{align}
     \mathrm{LIME}^0_i(\bmx^t) = \lim_{\eta \to 0}\frac{\partial \mathrm{EIG}_i (\bmx^t) }{\partial x_i}
     = \lim_{\bmx^0 \to \bmx^t}\frac{\partial \mathrm{IG}_i (\bmx^t \mid \bmx^0) }{\partial x_i},
\end{align}
where $P(\bmx^0 )= \calN(\bmx \mid \bmx^t, \eta \sfI_M)$ is used in the definition of EIG in Eq.~\eqref{eq:EIG-def}. 
\end{theorem}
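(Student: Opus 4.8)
The plan is to observe that the theorem asserts a chain of three equalities, so the cleanest strategy is to show that each of the two right-hand quantities collapses, in its respective limit, to the \emph{same} object — the local gradient $\partial f(\bmx^t)/\partial x_i$ — and then invoke the fact (recorded in the discussion following Theorem~\ref{th:LIME_deviation_agnositic}) that $\mathrm{LIME}^0_i(\bmx^t)$ is itself a local estimator of that gradient. Thus I would not manipulate LIME, EIG, and IG directly against one another; I would instead pin all three to $\partial f(\bmx^t)/\partial x_i$ and let transitivity finish the job.

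First I would establish the rightmost equality by differentiating the IG definition in Eq.~\eqref{eq:IG-def} with respect to the $i$-th component of the test point. The product rule yields one term in which the derivative strikes the prefactor $(x_i^t-x_i^0)$, leaving the $\alpha$-averaged first derivative of $f$, plus a second term still carrying the explicit factor $(x_i^t-x_i^0)$ multiplied by an $\alpha$-weighted second derivative. Letting $\bmx^0\to\bmx^t$, the integration path $\bmx^0+(\bmx^t-\bmx^0)\alpha$ collapses to $\bmx^t$ for every $\alpha$, so the first term tends to $\partial f(\bmx^t)/\partial x_i$ while the second vanishes because of its $(x_i^t-x_i^0)$ factor. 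This gives $\lim_{\bmx^0\to\bmx^t}\partial\mathrm{IG}_i/\partial x_i=\partial f(\bmx^t)/\partial x_i$.

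Next I would treat the middle term. Differentiating the EIG definition in Eq.~\eqref{eq:EIG-def} with $P=P_\eta$ under the integral sign produces the same structure as above but averaged over $\bmx^0\sim\calN(\bmx^t,\eta\sfI_M)$. Here I would use the Gaussian moments recorded just above the theorem, namely $\langle x_i^0-x_i^t\rangle=0$ and $\langle(x_i^0-x_i^t)(x_k^0-x_k^t)\rangle=\delta_{i,k}\eta$, together with the power expansion of $\partial f/\partial x_i$ around $\bmx^t$ in Eq.~\eqref{eq:f-taylor}. As $\eta\to0$ the mass of $P_\eta$ concentrates at $\bmx^t$: the leading contribution converges to $\partial f(\bmx^t)/\partial x_i$, every correction term is either odd in $(\bmx^0-\bmx^t)$ and hence annihilated by symmetry, or carries a factor of order $\eta$ and hence vanishes. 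Since $\eta\to0$ is exactly the $\bmx^0\to\bmx^t$ limit in the delta-function sense, this coincides with the IG limit just computed, and combining with $\mathrm{LIME}^0_i(\bmx^t)=\partial f(\bmx^t)/\partial x_i$ closes the chain.

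The main obstacle I anticipate is the rigorous justification of the limit interchanges rather than the algebra itself. Concretely, one must confirm that differentiation $\partial/\partial x_i^t$ commutes with both the $\alpha$-integration and the $\bmx^0$-integration — a dominated-convergence argument under the assumed smoothness of $f$ — and that the entire tower of higher-order terms in the expansion~\eqref{eq:f-taylor} genuinely collapses in the limit, with the even-moment terms contributing only at order $\eta$ and the odd-moment terms cancelling by the symmetry of $P_\eta$. Once these interchanges and the order-of-magnitude bookkeeping are secured, the remainder of the argument is routine.
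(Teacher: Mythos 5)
Your proposal is correct and follows essentially the same route as the paper's own proof: differentiate the (E)IG definition via the product rule, observe that the term carrying the $(x_i^t-x_i^0)$ factor vanishes as the baseline collapses onto $\bmx^t$ (using the Gaussian moments for EIG), so both derivatives tend to $\partial f(\bmx^t)/\partial x_i$, which is what $\mathrm{LIME}^0_i$ estimates. Your explicit attention to interchanging differentiation with the $\alpha$- and $\bmx^0$-integrations is a point the paper passes over silently, but it does not change the argument.
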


Since EIG, SV, and LIME can be derived from or associated with IG as shown above, it is legitimate to say that they are in the \textit{integrated gradient family}.

\subsection{Summary of limitations}

We have shown that IG, EIG, SV, and LIME are deviation-agnostic in Theorems~\ref{th:IG_EIG_deviation_agnositic},~\ref{th:SV_deviation_agnositic}, and~\ref{th:LIME_deviation_agnositic}. Since our goal is to provide an actionable explanation on a detected anomaly, this can be a serious issue. We have also shown that SV and LIME are derived from or associated with IG in Theorems~\ref{th:SV=EIG} and~\ref{th:LIME=derivative_of_EIG}. As suggested by IG's Taylor series representation in Eq.~\eqref{eq:IG_Taylor_expansion}, the attribution score of the IG family is essentially the differential increment of $f(\bmx)$ when going from the baseline input $\bmx^0$ to the test point $\bmx^t$ (see Fig.~\ref{fig:problem_setting_and_deviation_v_increment} (b)). As already discussed, the biggest issue here is the arbitrariness of the baseline input. EIG and SV attempt to neutralize it by integrating out $\bmx^0$ in exchange for the demanding requirement on the availability of the global distribution $P(\bmx)$.

These two issues -- the deviation-agnostic property and the explicit or implicit dependency on the arbitrary baseline input -- are inherent to the IG family. A key idea in the proposed framework is to leverage the point that gives the highest possible likelihood in the vicinity of a test sample as the reference point for attribution (c.f.~Table~\ref{table:comparison chart}). In the next section, we will show that this idea indeed succeeds in eliminating these issues.

\section{Likelihood Compensation}\label{Sec:MOC}

This section presents a novel \textit{likelihood-based} framework for anomaly attribution.

\subsection{Seeking reference point through likelihood}
\label{subsec:seeking_reference_point}

\paragraph{Definition of LC}
In a typical anomaly detection scenario, samples in the training dataset are assumed to have been collected under normal conditions, and hence, the learned function $y=f(\bmx)$ represents normalcy as well. As discussed in Sec.~\ref{sec:problem_setting}, the canonical measure of anomalousness is negative log likelihood $-\ln p(y\mid \bmx)$. A low likelihood value signifies anomaly, and vice versa. From a geometric perspective, on the other hand, being an anomaly implies deviating from a certain normal value. We are interested in integrating these two perspectives. 

Suppose we just observed a test sample $(\bmx^t,y^t)$ being anomalous because of a low likelihood value. Given the regression function $y=f(\bmx)$, there are two possible geometric interpretations on the anomalousness (see Fig.~\ref{fig:LC_convergence.pdf}). One is to start with the input $\bmx = \bmx^t$, and observe the deviation $f(\bmx^t)-y^t$. In some sense, $(\bmx,y)=(\bmx^t,f(\bmx^t))$ is a reference point against which the observed sample $(\bmx^t,y^t)$ is judged. The other is to start with the output $y=y^t$, and move horizontally, looking for a perturbation $\bmdelta$ such that $\bmx = \bmx^t + \bmdelta$ gives the maximum possible fit to the normal model. In this case, the reference point is $(\bmx^t + \bmdelta, y^t)$ and $\bmdelta$ is a ``horizontal deviation.'' Since $\bmdelta$ is supposed to be zero if the sample is perfectly normal, each component $\delta_1,\ldots,\delta_M$ can be viewed as a value indicative of the responsibility of each input variable. 

Based on the intuition above, we propose a new \textit{likelihood-based} attribution scoring framework with the following optimization problem: 
\begin{align}\label{eq:LC_definition_1sample_general}
    \bmdelta^* = \arg \max_{\bmdelta}\left\{ 
    \ln p(y^t \mid \bmx^t + \bmdelta) 
    \right\} \quad  \mbox{subject to } \quad \bmx^t+ \bmdelta \in \mathrm{vic}(\bmx^t),
\end{align}
where $\mathrm{vic}(\bmx^t)$ reads ``in the vicinity of $\bmx^t$.'' We call the $\bmdelta^*$ the \textbf{likelihood compensation} (LC), as it compensates for the loss in likelihood incurred by the anomalous prediction. Notice that $\bmdelta^*$ is defined through $p(y \mid \bmx)$, and hence, the randomness of $y$ is automatically taken into account. In other words, $y^t$ does not have to be absolutely correct.

In the collective anomaly detection/attribution case corresponding to $a(\calD_{\mathrm{test}})$ in Eq.~\eqref{eq:changeScoreDef}, the LC score is defined as
\begin{align}\label{eq:LC_definition_general}
    \bmdelta^* = \arg \max_{\bmdelta}\left\{ \frac{1}{N_{\mathrm{test}}}\sum_{t=1}^{N_{\mathrm{test}}}
    \ln p(y^t \mid \bmx^t + \bmdelta) 
    \right\}  \quad \mbox{subject to }  \quad \bmx^t+\bmdelta \in \mathrm{vic}(\bmx^t),
\end{align}
which obviously includes Eq.~\eqref{eq:LC_definition_1sample_general} as a special case. In the collective case, the resulting attribution is an averaged explanation for  $\calD_{\mathrm{test}}$. For example, it explains what was wrong with yesterday overall, rather than explaining about a specific moment in the day. Section~\ref{subsec:bulding experiments} provides such a real-world scenario.  

The LC score has unique features compared to that of the IG family. First, it is \textit{deviation-sensitive}. This is obvious because the log-likelihood itself is the measure of anomalousness, and the deviation should be reflected in the anomaly score. Second, it is more principled as a solution to anomaly attribution because both anomaly detection and attribution are formalized on a common ground of likelihood. Third, it provides richer information on the input variables. $\bmdelta$ is a correction to the input to get the reference point having the highest possible likelihood in the vicinity of $\bmx^t$, admitting an interpretation like ``the sample would have been normal if the input value had been $\bmx^t+\bmdelta$.'' This is one way of analyzing anomalies with a counterfactual hypothesis, a unique property lacking in the IG family.

\paragraph{Optimization problem for LC}
To compute the LC score, we need to specify the functional form of $p(y \mid \bmx)$. We employ a Gaussian-based observation model
\begin{gather}
\label{eq:obs_model}
p(y^t\mid \bm{x}^t+\bm{\delta}) =
\mathcal{N}(y^t \mid f(\bm{x}^t+\bm{\delta}), \sigma^2(\bm{x}^t)).
\end{gather}
We discuss how to estimate the variance $\sigma^2(\bm{x}^t)$ in Sec.~\ref{subsec:getting_probabilistic_model}. 

In addition, we need to define the vicinity. The vicinity constraint can be incorporated as regularization on $\bmdelta$. This can be problem-specific. For example, if there is an infeasible region in the domain of $\bmx$, the regularization should penalize such a choice of $\bm{\delta}$ that makes $\bm{x}^t+\bm{\delta}$ infeasible. If there are no known constraints in the domain of~$\bmx$, it should be designed to properly address a well-known issue of $\ell_1$-regularization in the original LIME: In the presence of multiple correlated explanatory variables, lasso tends to pick one at random~\cite{roy2017selection}, which can be problematic in attribution. Here, we propose to use the elastic net regularization~\cite{ESL2}. Now the optimization problem on a perturbation $\bmdelta$ is written as: 
\begin{align} \label{eq:LC_Gaussian_elastic_net}
\bm{\delta}^*=\arg
\min_{\bm{\delta}}\left\{ \frac{1}{N_\mathrm{test}}\sum_{t=1}^{N_\mathrm{test}}
\frac{\left[ y^t - f(\bm{x}^t + \bm{\delta})\right]^2}{2 \sigma^2(\bm{x}^t)} 
+\frac{1}{2}\lambda \| \bm{\delta} \|_2^2 + \nu \| \bm{\delta}\|_1
\right\}.
\end{align}
This is the main problem formulation studied in this paper. Note that this includes $N_{\mathrm{test}}=1$ as a special case.

\paragraph{Is Gaussian general enough?} Here we briefly discuss how the Gaussian-based formulation~\eqref{eq:obs_model} does \textit{not} result in much loss of generality.  Clearly, the distribution of $f(\bmx^t)$ is not always Gaussian in general. Notice, however, Eq.~\eqref{eq:obs_model} says that the \textit{deviation} or the \textit{error} $f(\bmx^t)-y^t$ should follow Gaussian. This is exactly the same situation when Carl Friedrich Gauss invented Gaussian-based fitting~\cite{brereton2014normal}: Planetary motions do not follow Gaussian but the error does. See Fig.~\ref{fig:Building_kde_comparison_GBTree_29_highQ.pdf} for a real example in our context.

\paragraph{What if $y^t$ is incorrect?}
We have provided an intuition of LC as the deviation measured horizontally. This may lead to a question of what if $y^t$ is incorrect. As commented below Eq.~\eqref{eq:LC_definition_1sample_general}, our framework views $y$ as a random variable and it does not have to be absolutely correct. In fact, the optimization problem~\eqref{eq:LC_Gaussian_elastic_net} shows that the resulting attribution score $\bmdelta^*$ does depend on the variance of $y$, given $\bmx$. Of course, it is possible that the error in $y^t$ happens to go far beyond the reasonable range assumed in $p(y \mid \bmx)$. In that case, however, the attribution problem itself would be ill-posed to any attribution methods.

\paragraph{Relationship with adversarial training}

The optimization problem of LC~\eqref{eq:LC_definition_1sample_general} can be rewritten as 
\begin{align}
    \min_{\bmdelta:\ \bmx^t+\bmdelta \in \mathrm{vic}(\bmx^t)} \left\langle \mbox{Loss}(\bmx^t + \bmdelta \mid y^t, \bmtheta)\right\rangle,
\end{align}
where $\mathrm{Loss}$ denotes the loss function, which is the negative log likelihood in our case, and $\bmtheta$ is the model parameters that are actually not accessible in our doubly black-box setting. Also, $\langle \cdot \rangle$ denotes empirical average over $\{(\bmx^t,y^t)\}$. This form is reminiscent of the min-max problem in adversarial training~\cite{madry2018towards,qin2019adversarial}:
\begin{align}
    \min_{\bmtheta} 
    \max_{\bmdelta:\ \bmx^t+ \bmdelta \in \mathrm{vic}(\bmx^t)} \left\langle \mbox{Loss}(\bmx^t + \bmdelta \mid y^t, \bmtheta)\right\rangle,
\end{align}
where $y^t$ is typically the class label of the $t$-th training sample, unlike ours. The similarity is obvious, but they are working towards the opposite directions. LC's starting point is that the sample is anomalous, and $\bmdelta$ is to bring it back to a normal point. In contrast, adversarial training assumes the samples are normal, and $\bmdelta$ is to make the normal sample as adversarial as possible by changing the output significantly. Also a requirement for an example to be adversarial is that the change should be imperceptible, which is not important in our case.

\begin{figure}[t]
\begin{center}
\includegraphics[trim={3.5cm 1.5cm 1.5cm 3cm},clip,width=10cm]{./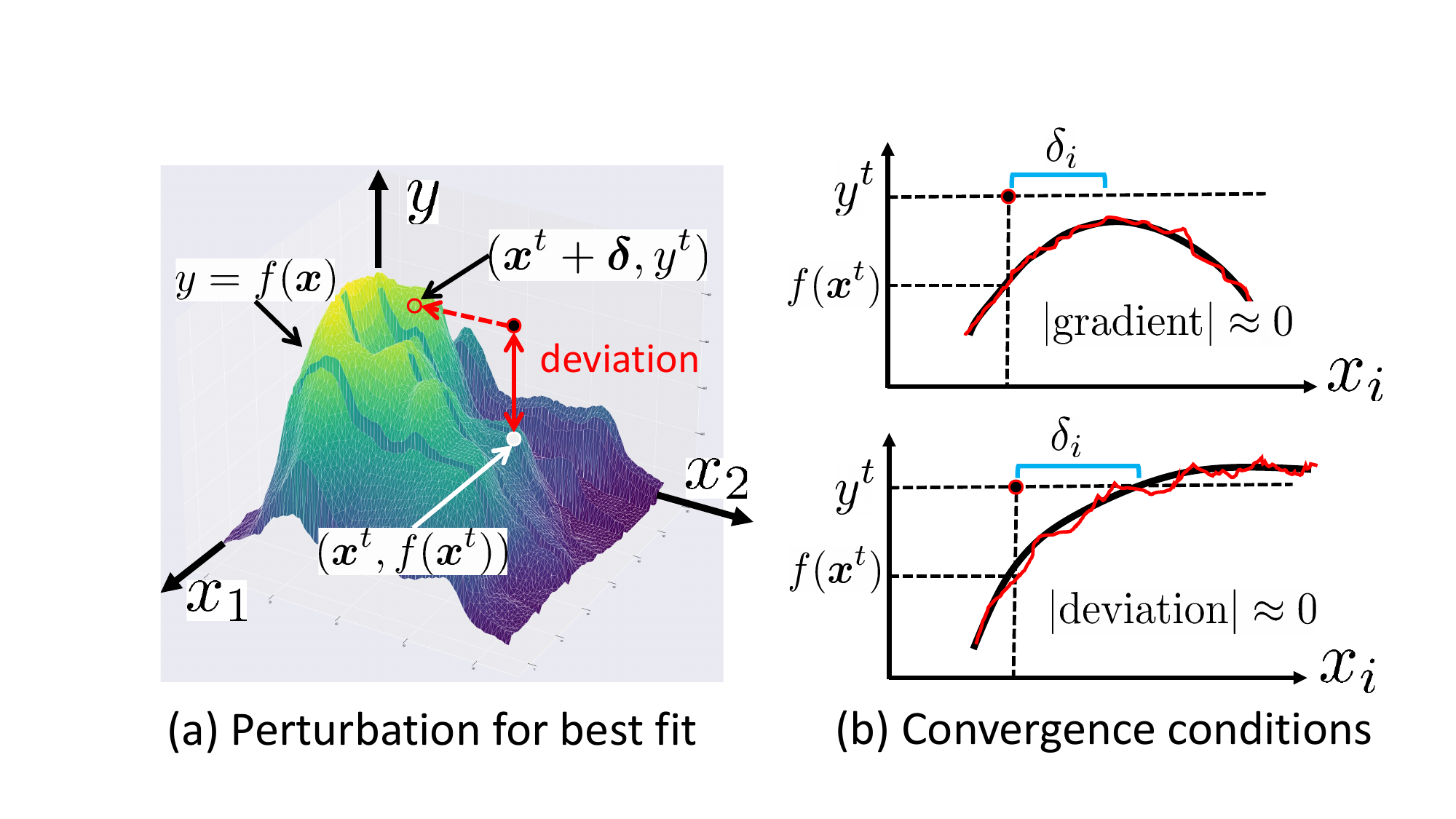}
\end{center}
\caption{Illustration of likelihood compensation (LC). (a)~For a given test sample $(y^t,\bm{x}^t)$, LC seeks a perturbation~$\bmdelta$ that achieves the best possible fit with the black-box regression model $f(\bm{x})$. (b) The iterative updates Eqs.~\eqref{eq:delta_solution}-\eqref{eq:phi_delta} converge when the deviation or the (smoothed) gradient vanishes. See Sec.~\ref{subsec:optimization} for more details. }
\label{fig:LC_convergence.pdf}
\end{figure}

\subsection{Deriving probabilistic prediction model}
\label{subsec:getting_probabilistic_model}

So far we have assumed the predictive distribution $p(y\mid \bm{x})$ is given as Eq.~\eqref{eq:obs_model}, i.e., 
\begin{align}\label{eq:p(y|x)_Gaussian}
    p(y\mid \bmx)=\calN(y \mid f(\bmx), \sigma^2(\bmx)).
\end{align}
In this Gaussian observation model, the only parameter to be estimated is $\sigma^2(\bmx)$. If there are too few test samples, we have no choice but to set $\sigma^2(\bmx)$ to a constant using prior knowledge. Otherwise, we can obtain an estimate of $\sigma^2(\bmx)$ using a subset of $\mathcal{D}_\mathrm{test}$ in a cross-validation (CV)-like fashion as follows. 

Let $\mathcal{D}^t_\mathrm{ho}=\{(\bm{x}^{(n)}, y^{(n)}) \mid n=1,\ldots, N_\mathrm{ho} \} \subset \mathcal{D}_\mathrm{test}$ be a held-out (`ho') data set that does not include a given test sample $(\bm{x}^t,y^t)$. Here $N_\mathrm{ho}$ is the number of samples in it. For the observation model Eq.~\eqref{eq:obs_model} and the test sample $\bm{x}^t$, we consider a locally weighted version of maximum likelihood:
\begin{align}\label{eq:ML_for_sigma2}
&\max_{\sigma^2} \sum_{n=1}^{N_\mathrm{ho}} w_n(\bm{x}^t)
\ln p(y^{(n)}\mid \bmx^{(n)}) 
=\max_{\sigma^2}
\sum_{n=1}^{N_\mathrm{ho}} w_n(\bm{x}^t)
\left\{\ln\frac{1}{\sqrt{2\pi\sigma^2}} - \frac{(y^{(n)}  - f(\bm{x}^{(n)})  )^2}{2\sigma^2}\right\},
\end{align}
where $w_n(\bm{x}^t)$ is the similarity between $\bm{x}^t$ and $\bm{x}^{(n)}$. A reasonable choice of this is
\begin{align}\label{eq:Gaussian_kernel}
w_n(\bm{x}^t)= w_0 + \exp\left\{-\frac{1}{2\eta_0^2}\| \bmx^{(n)} - \bmx^t \|^2 \right\},
\end{align}
where $w_0$ and $\eta^2$ are constants. The maximizer of Eq.~\eqref{eq:ML_for_sigma2} can be easily found by taking the derivative w.r.t.~$\sigma^{-2}$. The solution is given by
\begin{align}\label{eq:sigma2_for_test_samples}
\sigma^2(\bmx^t) =  
\sum_{n=1}^{N_\mathrm{ho}}
\frac{w_n(\bm{x}^t)}{\sum_{m} w_m(\bm{x}^t)}
 \left[y^{(n)}  - f(\bm{x}^{(n)})  \right]^2.
\end{align}
This has to be computed for each $\bm{x}^t \in \mathcal{D}_\mathrm{test}$. When LC scores are compared over different $t$'s, too much variability in $\sigma^2(\bmx^t)$ tends to obfuscate meaningful signals. For standardized data, $\eta_0=1$ and $w_0 \gtrsim  5$ would be a reasonable choice.

\subsection{Deriving updating equation}
\label{subsec:optimization}

Although seemingly simple, solving the optimization problem~\eqref{eq:LC_Gaussian_elastic_net} is generally challenging. Due to the black-box nature of $f$, we do not have access to the parametric form of $f$, let alone the gradient. In addition, as is the case in deep neural networks, $f$ can be non-smooth (see the red curves in Fig.~\ref{fig:LC_convergence.pdf}~(b)), which makes numerical estimation of the gradient tricky.

To derive an optimization algorithm, we first note that there are two origins of non-smoothness in the objective function in~\eqref{eq:LC_Gaussian_elastic_net}. One is inherent to $f$ while the other is due to the added $\ell_1$ penalty. To separate them, let us denote the objective function in Eq.~\eqref{eq:LC_Gaussian_elastic_net} as $J(\bm{\delta})+\nu\|\bm{\delta} \|_1$, where
\begin{align}
    J(\bmdelta) \triangleq 
    \frac{1}{N_\mathrm{test}}\sum_{t=1}^{N_\mathrm{test}}
\frac{\left[ y^t - f(\bm{x}^t + \bm{\delta})\right]^2}{2 \sigma^2_t} 
+\frac{1}{2}\lambda \| \bm{\delta} \|_2^2.
\end{align}
Since we are interested only in a local solution in the vicinity of $\bm{\delta}=\bm{0}$, it is natural to adopt an iterative update algorithm starting from $\bm{\delta}\approx \bm{0}$. Suppose that we have an estimate $\bm{\delta}=\bm{\delta}^\mathrm{old}$ that we wish to update. If we have a reasonable approximation of the gradient in its vicinity, denoted by $\llangle \nabla J(\bm{\delta}^\mathrm{old}) \rrangle$, the next estimate can be found by
\begin{align}\label{eq:prox_Gradient_eq}
\bm{\delta}^\mathrm{new}=
\arg\min_{\bm{\delta}} &\left\{ J(\bm{\delta}^\mathrm{old})+
(\bm{\delta}-\bm{\delta}^\mathrm{old})^\top \llangle\nabla J(\bm{\delta}^\mathrm{old})\rrangle
+\frac{1}{2\kappa}\|\bm{\delta}- \bm{\delta}^\mathrm{old} \|_2^2
+ \nu \| \bm{\delta}\|_1
\right\}
\end{align}
in the spirit of the proximal gradient~\cite{parikh2014proximal}, where $\kappa$ is a hyperparameter representing the learning rate. Notice that the first three terms in the curly bracket correspond to a second-order approximation of $J(\bm{\delta})$ in the vicinity of $\bm{\delta}^\mathrm{old}$. We find the best estimate under this approximation.

Fortunately, the r.h.s.~has an analytic solution. By differentiating the objective in Eq.~\eqref{eq:prox_Gradient_eq} w.r.t.~$\bmdelta$ and equating the result to $\bmzero$, we have the condition of optimality as
\begin{align}\label{eq:optimality_delta}
\bm{\delta} = \bm{\phi} - \kappa \nu \ \mathrm{sign}(\bm{\delta}), \quad \quad \mbox{where}\quad \bm{\phi} \triangleq \bm{\delta}^\mathrm{old} -\kappa \llangle \nabla J(\bm{\delta}^\mathrm{old})\rrangle.
\end{align}
For the $i$-th dimension, if $\phi_i > \kappa\nu$ holds, we have ${\phi}_i \pm \kappa >0$ and thus $\phi_i - \kappa \nu \ \mathrm{sign}(\delta_i)$ must be positive. By setting $\mathrm{sign}(\delta_i)=1$, we conclude $\delta_i =  \phi_i - \kappa \nu$ in this case.  Similar arguments easily verify $\delta_i = \phi_i + \kappa \nu$ for  $\phi_i < - \kappa\nu$. An interesting situation arises when $|\phi_i| \leq \kappa \nu$. Remember that the sign function takes an indeterminate value within $[-1,1]$ at zero. If $\delta_i>0$ is assumed, $\mathrm{sign}(\delta_i) = +1$ and the r.h.s.~of Eq.~\eqref{eq:optimality_delta} must be $\phi_i - \kappa \nu$, which is negative and contradicts the positivity assumption. Thus, the only possible choice is $\delta_i =0$. To summarize, the solution of Eq.~\eqref{eq:prox_Gradient_eq} is given by
\begin{align}\label{eq:delta_solution}
    \delta_i =
    \begin{cases}
   \phi_i - \kappa\nu, &\phi_i > \kappa\nu \\
    0, &|\phi_i| \leq \kappa\nu \\
     \phi_i + \kappa\nu, &\phi_i < - \kappa\nu
    \end{cases}.
\end{align}
Performing differentiation, we see that $\bm{\phi}$ is given by
\begin{align}\label{eq:phi_delta}
\bm{\phi} &= (1 - \kappa\lambda)\bm{\delta}^\mathrm{old} +
\kappa
    \frac{1}{N_\mathrm{test}}\sum_{t=1}^{N_\mathrm{test}}
    \left\{
\frac{y^t - f(\bm{x}^t + \bm{\delta}) }{\sigma^2_t }
\right\}
\left\llangle
\frac{\partial f(\bm{x}^t+\bm{\delta})}{\partial \bm{\delta}} \right\rrangle.
\end{align}
Note that $f(\bm{x}^t + \bm{\delta})$ is readily available at any $\bm{\delta}$ without approximation. 

Here we provide some intuition behind the updating equation~\eqref{eq:phi_delta}. Convergence is achieved when either the deviation $y^t - f$ or the gradient $\llangle \partial f/\partial \bm{\delta}\rrangle$ vanishes at $\bm{x}^t + \bm{\delta}$. These situations are illustrated in Fig.~\ref{fig:LC_convergence.pdf}~(b). As shown in the figure, $\delta_i$ corresponds to the ``\textit{horizontal deviation}'' along the $x_i$ axis between the test sample and the regression function. If there is no horizontal intersection on the regression surface it seeks the zero gradient point based on a smooth surrogate of the gradient.

\subsection{Estimating smooth gradient}
\label{subsec:gradient_etimation}

The final step is to estimate the smooth surrogate of the gradient $\llangle \partial f/\partial \bm{\delta} \rrangle$ in the vicinity of $\bmx_\delta \triangleq \bmx^t + \bmdelta$. To handle the potential non-differentiability of $f$, we define the gradient as the local mean of the slope function $[f(\bmx_\delta + h\bme_i) - f(\bmx_\delta)]/h$, where $h$ is a small perturbation and $\bme_i$ is a unit vector which assumes value 1 in the $i$-th entry and 0 otherwise. Let $p(h \mid \bmx_\delta)$ be an assumed local distribution for $h$ around $\bmx_\delta$. Now we define the local gradient as  
\begin{align}\label{eq:gradient_estimation_as_mean_slope}
    \left\llangle \frac{\partial f(\bmx_\delta)}{\partial \delta_i} 
    \right\rrangle
    &\triangleq
    \int \rmd h \ p(h \mid \bmx_\delta) \frac{f(\bmx_\delta + h\bme_i) - f(\bmx_\delta)}{h}
    \approx  \frac{1}{N_s}\sum_{n=1}^{N_s}\frac{f(\bmx_\delta + h^{[n]}\bme_i) - f(\bmx_\delta)}{h^{[n]}},
\end{align}
where $h^{[n]}$ is the $n$-the sample from $p(h \mid \bmx_\delta)$ and $N_s$ is the number of perturbations generated. The second approximate equality is due to Monte Carlo approximation, which is guaranteed to converge as $N_s\to \infty$. One reasonable choice for the local distribution is $p(h \mid \bmx_\delta)=\calN(h \mid \bmx_\delta,\eta^2)$ with $\eta^2$ being the standard deviation of the perturbation. In this case, perturbations that happen to be zero numerically have to be excluded from the computation.

\paragraph{Alternative approaches to smooth gradient estimation}
It is worth noting that local gradient estimation has been studied in evolutionary computation for years~\cite{salomon1998evolutionary,salomon2009evolutionary}. The key idea is to leverage the notion of Gaussian smoothing of a potentially non-continuous function:
\begin{align}
    f_\eta(\bmx_\delta) &\approx \int\rmd\bmh\ \calN(\bmh \mid  \bmzero, \eta^2\sfI_M) f(\bmx_\delta + \bmh),
\end{align}
where $\sfI_M$ is the $M$-dimensional identity matrix. As $f_\eta$ can be viewed as a locally smoothed version of $f$, the gradient of $f_\eta$ is a reasonable estimate of $\llangle \partial f/\partial \bm{\delta} \rrangle$. By using integration by parts and Monte Carlo estimation, the Gaussian smoothing approach gives
\begin{align}\label{eq:gradient_estimation_Gaussian_smoothing}
    \frac{\partial f_\eta(\bmx_\delta)}{\partial x_i} \approx \frac{1}{N_s} \sum_{n=1}^{N_s}\frac{h_i^{[n]}}{\eta^2 }\left[ f(\bmx_\delta + h_i^{[n]}\bme_i) - \bar{f}(\bmx_\delta) \right], \quad \bar{f}(\bmx_\delta) \triangleq\frac{1}{N_s}\sum_n f(\bmx_\delta + \bme_i h^{[n]}), 
\end{align}
which is another reasonable estimate of the local gradient. From the perspective of numerical computation, however, there is no compelling reason to use this expression instead of~\eqref{eq:gradient_estimation_as_mean_slope} because~\eqref{eq:gradient_estimation_Gaussian_smoothing} tends to have a much larger variance than~\eqref{eq:gradient_estimation_as_mean_slope}.

Another reasonable approach is to locally fit a linear function and use the coefficients as a surrogate of the gradient, as proposed by~\cite{Ide21AAAI}. This provides an estimate as accurately as the direct slope estimation approach~\eqref{eq:gradient_estimation_as_mean_slope} does. However, one issue is that the resulting estimation formula is not linear in $f$ and hence is not eligible for fast vectorized computation. This can be problematic when repeated gradient estimation is required.

\begin{algorithm}[tb]
\caption{Likelihood Compensation}\label{algo:OC}
\label{alg:algorithm}
\begin{algorithmic}[1] 
\Require Black-box regression model $f(\bm{x})$, test data $\mathcal{D}_\mathrm{test}$, and parameters $\lambda,\nu,\kappa$.
\For{all $\bm{x}^t \in  \mathcal{D}_\mathrm{test}$ }
	\State Compute  $\sigma^2_t$ with Eq.~\eqref{eq:sigma2_for_test_samples}.
\EndFor
\State Randomly initialize $\bm{\delta}\approx \bm{0}$.
\Repeat
\State Set $\bm{g}=\bm{0}$.
\For{all $\bm{x}^t \in  \mathcal{D}_\mathrm{test}$ }
	\State Compute $\left\llangle
\frac{\partial f(\bm{x}^t+\bm{\delta})}{\partial \bm{\delta}} \right\rrangle$ with Eq.~\eqref{eq:gradient_estimation_as_mean_slope}
	\State Update $\bm{g} \leftarrow \bm{g} + \left\llangle
\frac{\partial f(\bm{x}^t+\bm{\delta})}{\partial \bm{\delta}} \right\rrangle\frac{y^t - f(\bm{x}^t+\bm{\delta}) }{N_{\mathrm{test}} \sigma^2_t }$.
\EndFor
\State $\bm{\phi} = (1 - \kappa\lambda)\bm{\delta} + \kappa \bm{g}$.
\State Find $\bm{\delta}$ with Eq.~\eqref{eq:delta_solution}.
\Until convergence.
\State \Return $\bm{\delta}$
\end{algorithmic}
\end{algorithm}

\subsection{Algorithm Summary}
\label{subsec:Algo_summary}

Algorithm~\ref{algo:OC} summarizes the iterative procedure for finding $\bm{\delta}$. The most important parameter is the $\ell_1$ regularization strength $\nu$, which has to be hand-tuned depending on the business requirements of the application of interest. On the other hand, the $\ell_2$ strength $\lambda$ controls the overall scale of $\bm{\delta}$. It can be fixed to some value between 0 and 1. In our experiments, it was adjusted so its scale is on the same order as LIME's output for consistency.
It is generally recommended to rescale the input variables to have zero mean and unit variance before starting the iteration (assuming $N_\mathrm{test}\gg 1$), and retrieve the scale factors after convergence. 
For the learning rate $\kappa$, in our experiments, we fixed $\kappa = 0.1$ and shrank it (geometrically) by a factor of 0.98 in every iteration.

In addition to the parameters listed in Algorithm~\ref{algo:OC}, gradient estimation by Eq.~\eqref{eq:gradient_estimation_Gaussian_smoothing} requires two minor parameters, $N^\mathrm{s},\eta$. In our experiments, we used $N^\mathrm{s} = 10$, which was confirmed to provide sufficient convergence.

\section{Experiments}\label{sec:experiments}
This section presents empirical evaluation of the proposed anomaly attribution framework. For comprehensive coverage, we use five datasets with different statistical complexities, including one from a real business use case. The goals of this evaluation are to 1) provide a clear picture of what deviation-sensitivity of an attribution method buys us using a simple synthetic model; 2) demonstrate LC's capability of  providing directly interpretable attribution scores; 3) point out inherent issues with the IG family in anomaly attribution; 4) quantitatively analyze the consistency and inconsistency among different attribution methods; and 5) demonstrate how LC was able to make a difference in a real business scenario. 
For the reader's convenience, we summarize the datasets we used in Table~\ref{table:datasets}.

\begin{table}[htb]
\label{table:datasets}
\caption{Summary of the datasets used. `NA' denotes `not available.'}
\footnotesize
\begin{tabular}{llllll}
\hline \hline
                & $N_{\mathrm{train}}$ & $N_{\mathrm{test}}$ & $M$ & $f(\bmx)$                   & characteristics                                                        \\ \hline
2D sinusoidal   & $\infty$          & 1                   & 2   & analytic                & smooth, periodic, noise-free                                           \\
Boston Housing        & 506                  & 1                   & 13  & RF           & multimodal, clustered, partially discontinuous                         \\
California Housing     & $20\,640$            & 1                   & 9   & GBT & large, uni- or bimodal, partially discontinuous \\
Diabetes        & 442                  & 1                   & 10  & DNN                     & unimodal, semi-discrete                                                \\
Building Energy & NA                   & 24                  & 12  & commercial         & noisy, periodic, online, real building HVAC  \\
\hline 
\end{tabular}
\end{table}

\subsection{Baselines}\label{subsec:baselines}

We compare LC with five possible alternatives\footnote{The Python implementation will be made available upon acceptance of the paper.}: LIME~\cite{ribeiro2016should}, SV~\cite{vstrumbelj2014explaining}, IG~\cite{SippleICML2020}, EIG as defined in Eq.~\eqref{eq:EIG-def}, and the $Z$-score, as summarized in Table~\ref{table:comparison chart}. For anomaly attribution, LIME, SV, IG, and EIG are applied to the deviation $f(\bmx) - y $ rather than $f(\bmx)$. The $Z$-score is one of the standard univariate outlier detection metrics in the unsupervised setting, and defined as $ Z_i \triangleq (x_i^t-m_i)/\sigma_i$ for the $i$-th variable, where $m_i,\sigma_i$ are the mean and the standard deviation of $x_i$, respectively. In SV, we used the same sampling scheme as that proposed in~\cite{vstrumbelj2014explaining} to handle combinatorial complexity. In IG and EIG, we used the trapezoidal rule with 100 equally-spaced intervals to perform the integration w.r.t.~$\alpha$. In IG, EIG, and LC, we used the same gradient estimation algorithm in Eq.~\eqref{eq:gradient_estimation_as_mean_slope}.

We listed SV, EIG, and the $Z$-score here for comparison purposes despite the fact that they are not actually applicable in our doubly black-box setting (see Table~\ref{table:comparison chart}). We excluded other contrastive and counterfactual methods such as \cite{wachter2017counterfactual} as they require white-box access to the model and are predominantly used in classification settings. In the real-world case study in Sec.~\ref{subsec:bulding experiments}, we validated our approach with feedback from domain experts as opposed to crowd-sourced user studies with lay users. In industrial applications, the end-user's interests can be highly specific to the particular business needs and the system's inner workings tend to be difficult for non-experts to understand and simulate.

\begin{figure}[tb]
\centering
\includegraphics[trim={0.2cm 0.cm 0cm 0.4cm},clip,width=8cm]{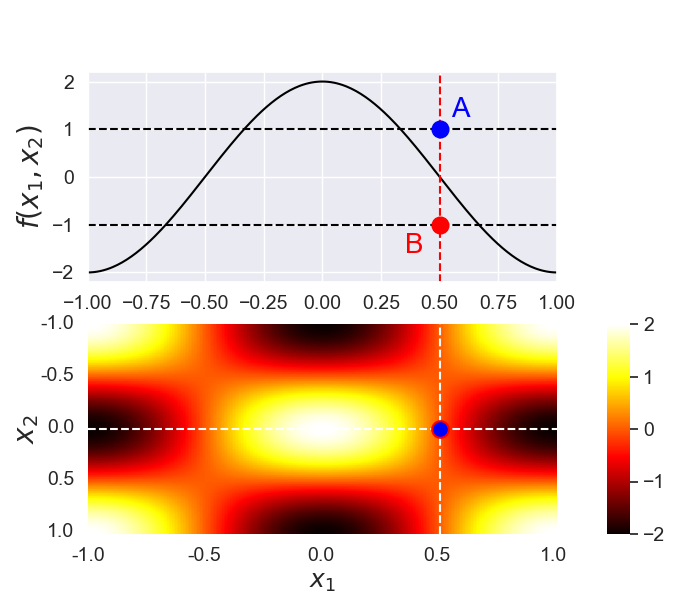}
\vspace{-3mm}
\caption{2D Sinusoidal Curve with the $x_2=0$ slice on the top. The points A and B are at $y^t=1$ and $-1$, respectively, while they are at the same $\bmx^t =(0.5,0)$. }
\label{fig:2D_sinusoidal}
\end{figure}

\subsection{Deviation-sensitivity} 

\paragraph{2D Sinusoidal Curve data}
To illustrate LC's deviation-sensitive property, we computed attribution scores for a regression curve defined by a two-dimensional (2D) sinusoidal function
\begin{align}
    f(\bmx) &= 2 \cos (\pi x_1)\sin(\pi x_2).
\end{align}
We defined two test points: A is at $(\bmx^t,y^t)=((0.5,0),+1)$ and B is at $((0.5,0),-1)$, as shown in Fig.~\ref{fig:2D_sinusoidal}. Computation of SV, EIG, and the $Z$-score needs the true distribution $P(\bmx)$. We used the empirical approximation for $P(\bmx)$ by randomly generating samples with the uniform distribution in $[-4,4]^2$, which was set to be wide enough to simulate sparse sample distributions, as opposed to Gaussian-like unimodal distributions. We created 10 held-out datasets, each of which consists of $N_{\mathrm{ho}}=100$ samples. Those data sets were used also to compute the mean and standard deviation for the $Z$-score. For IG, we gave two baseline inputs: $(0,0)$ and $(0,1)$, resulting in two IG scores for each dataset. The regularization parameters are set to be negligible values as regularization is unimportant in this low-dimensional setting.

\begin{figure}[bt]
\centering
\includegraphics[trim={0.cm 0.cm 0cm 0.cm},clip,width=7.6cm]{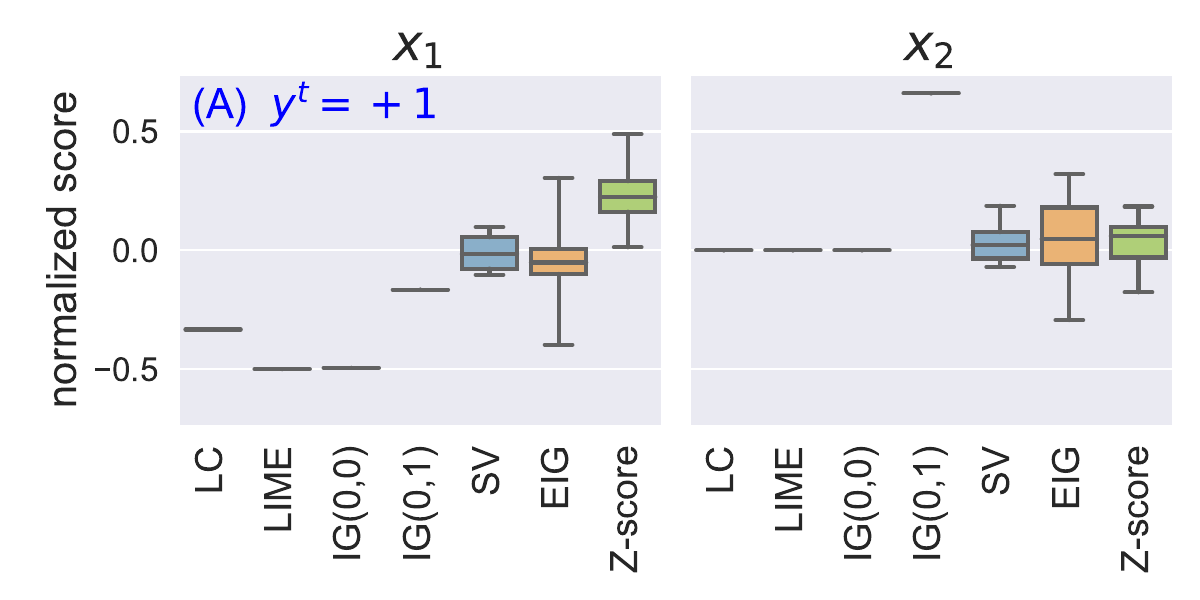}
\hspace{-3mm}	
\includegraphics[trim={0.cm 0.cm 0cm 0.cm},clip,width=7.6cm]{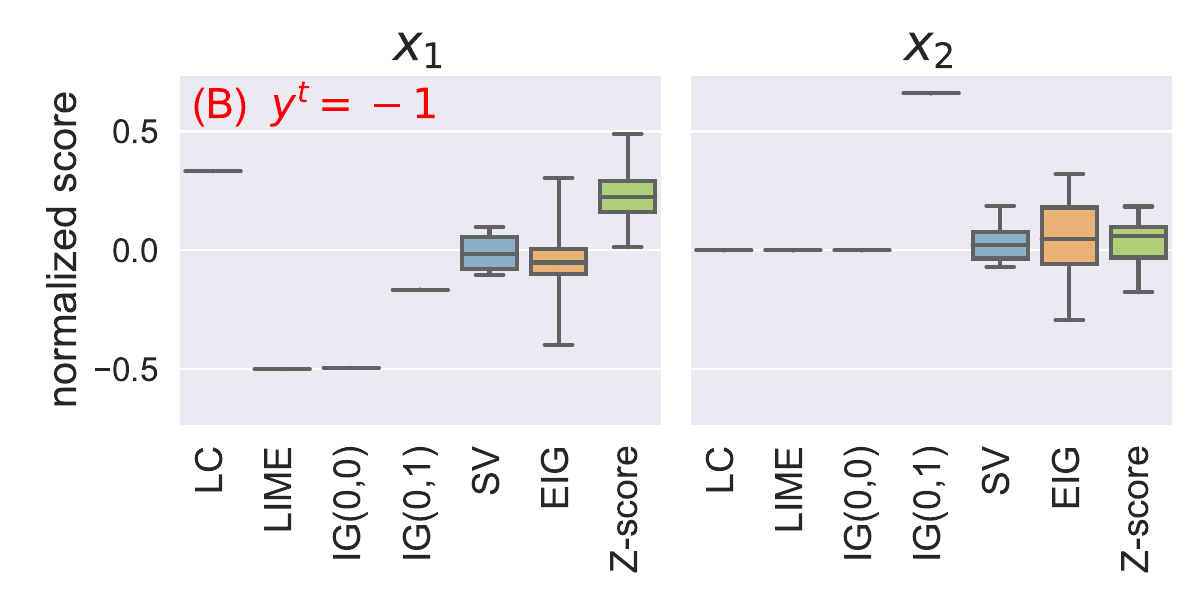}
\caption{Comparison of attribution scores on the 2D Sinusoidal Curve at two test points (A and B in Fig.~\ref{fig:2D_sinusoidal}). The scores were evaluated 10 times over randomly generated datasets. Only the LC scores differ between A and B. }
\label{fig:2D_sinusoidal_scores}
\end{figure}

Figure~\ref{fig:2D_sinusoidal_scores} compares the attribution score, where the mean and the standard deviation over the ten trials is shown. Randomness in LC, LIME and IGs comes only from gradient estimation and is negligible, while EIG, SV and the $Z$-score are directly impacted by the variability of the samples. The attribution scores are normalized by dividing by a scaling factor to make them mutually comparable. The most conspicuous observation from Fig.~\ref{fig:2D_sinusoidal_scores} is that only LC can distinguish the direction of the deviation between A and B; All the other methods produce exactly the same score between A and B due to the \textit{deviation-agnostic} property. As seen from Fig.~\ref{fig:2D_sinusoidal}, $\delta_1$ (the LC score for $x_1$) was negative for point A. This can be easily understood from the definition of LC as `horizontal deviation.' For the given $y^t=1$ value, the location of the point is unusual; it should have been a little more to the left (so it gets the maximum reward of likelihood). Similarly, for point B, $x_1=0.5$ was unusual for $y^t=-1$ and it should have been a little more to the right to match $y^t= -1$. 

We see that SV, EIG, and $Z$-scores have significant variability in contrast to LC, LIME and IGs. As discussed in Sec.~\ref{subsec:IG_and_EIG}, EIG is reduced to IG when the probability mass of $P(\bmx)$ is concentrated at the baseline input of IG. In this case, however, the distribution has been chosen to be a broad uniform distribution, which is at the opposite end of the spectrum from Dirac's delta function. In such a case, EIG generally has a large variability. For the cases where $P(\bmx)$ is close to unimodal, see Sec.~\ref{sec:variability_of_EIG_SV}. 

We also see that the attribution scores of IG significantly vary depending on the choice of the baseline input, which is either $(0,0)$ or $(0,1)$. EIG eliminates the need for arbitrary input by expectation. However, empirical expectation resulted in large error bars. These characteristics of the baseline methods, along with their deviation-agnostic property, make LC the preferred method for the task of anomaly attribution.

\subsection{Direct interpretability of LC}

\paragraph{Boston Housing data}

As an example of a real-world attribution task, we next use Boston Housing data~\cite{belsley2005regression}, a well-known benchmark data for regression. The task is to predict $y$, the median home price (`MEDV') of the districts in Boston, with $\bmx$, the input vector of size $M=13$, such as the percentage of the lower status of the population (`LSTAT') and the average number of rooms (`RM')\footnote{We excluded a variable named `B' from attribution for ethical concerns~\cite{sklearn113_Boston_Housing}.}. The total number of samples is 506. As one might expect, the data is very noisy. Figure~\ref{fig:Boston_2outliers_5variables_scatter} shows scatter plots between $y$ and six selected input variables, where multimodal, clustered structures are observed. We standardized the entire dataset for each variable to be zero mean and unit variance, then we held out $20$\% of the data as $\mathcal{D}_\mathrm{test}$, and trained the random forest (RF)~\cite{ESL2} on the rest. Viewing it as a black-box regression model $f(\bmx)$, we computed the anomaly score with Eq.~\eqref{eq:changeScoreDef}, based on $p(y \mid \bmx)$ estimated with Eqs.~\eqref{eq:p(y|x)_Gaussian}-\eqref{eq:sigma2_for_test_samples}, where $w_n$ is set to a constant on the held-out samples. Note that the training dataset is supposed to be unavailable in the doubly black-box setting. We explicitly use the training data here for comparison purposes.

\begin{figure}[tb]
\centering
\includegraphics[trim={0.cm 0.cm 0cm 0cm},clip,width=15cm]{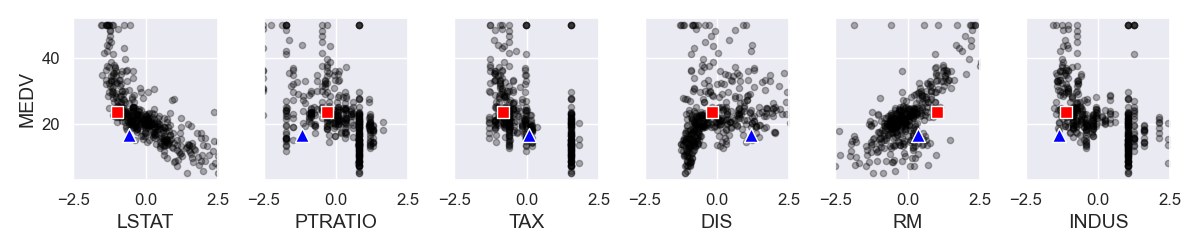}
\vspace{-2mm}
\caption{Boston Housing: Pairwise scatter plot between $y$ (MEDV) and selected input variables. The square and triangle show the detected first and second outliers in the test data. }
\label{fig:Boston_2outliers_5variables_scatter}
\end{figure}

Figure~\ref{fig:anomaly_score_boston_calif_diabetes.pngf}~(a) shows the computed anomaly scores. The first and second outliers are highlighted with the square and triangle symbols, respectively, which have also been shown in Fig.~\ref{fig:Boston_2outliers_5variables_scatter} with the same symbols. Unlike the general expectation from the word `outlier,' those anomalous samples are not necessarily in the low-density region in the scatter plot. One reason for this is that the pairwise scatter plot visualize $p(y^t \mid x^t_i)$ for each $i$ rather than $p(y^t \mid \bmx^t)$, on which the anomaly score was calculated. It is well-known that the marginal distribution makes interpretation tricky when deviations are important~\cite{molnar2019interpretable,kumar2020problems}. We strive to get a more sophisticated explanation specific to a test sample rather than aggregated signals in the marginals. 

\begin{figure}[tb]
\centering
\includegraphics[trim={0.cm 0.0cm 0cm 0cm},clip,width=\textwidth]{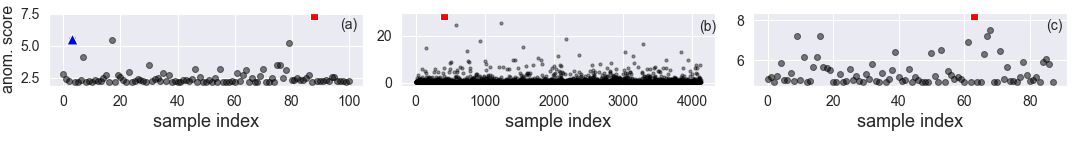}
\vspace{-5mm}
\caption{Anomaly score for (a) Boston Housing, (b) California Housing, and (c) Diabetes datasets. Top outliers are highlighted. }
\label{fig:anomaly_score_boston_calif_diabetes.pngf}
\end{figure}

Figure~\ref{fig:Boston_2outlier_score_comparison} shows attribution scores computed for the first and second outliers. We used $\lambda=0.5,\nu=0.1,\kappa=0.1$ for LC, which were adjusted so the entropy of the absolute score distribution is roughly the same as that of IG. The same regularization parameter was used for LIME. As discussed in Sec.~\ref{sec:EIG}, IG, EIG, and SV are pseudo-local methods in the sense that they require either a global distribution or a baseline input outside the vicinity of the test input, while LC and LIME are purely local attribution framework. To make LC and LIME comparable to the others, we used a relatively large $\eta=1$ for gradient estimation. For IG, the baseline input was equated to the mean of $\bmx$. 
In Fig.~\ref{fig:Boston_2outlier_score_comparison}~(1), all the methods except for the $Z$-score gave similar score distributions, where LSTAT and RM have dominating weights. Close inspection shows, however, that there are interesting differences in their signs. In the gradient-based approach LIME, LSTAT and RM give negative and positive signs, respectively, following the global trend in the scatter plots in Fig.~\ref{fig:Boston_2outliers_5variables_scatter}. As discussed in the previous subsection, LIME does not take account of $y$ of the test point. If a variable has a globally monotonic distribution, LIME tends to provide a similar score, regardless of the $y$ value. In contrast, LC's attribution score is directly interpretable. In this case, LC was negative for RM because a negative shift would give a better fit: ``The number of rooms is a bit too large for the (relatively low) price.'' In addition, LC's attribution score represents an actual shift. We can readily confirm that the red square would fall into the densest region of the RM-MEDV scatter plot with a shift by about 0.5 to the left in Fig.~\ref{fig:Boston_2outliers_5variables_scatter}.

\begin{figure}[bth]
\centering
\includegraphics[trim={0.cm 0.2cm 0cm 0cm},clip,width=7.9cm]{./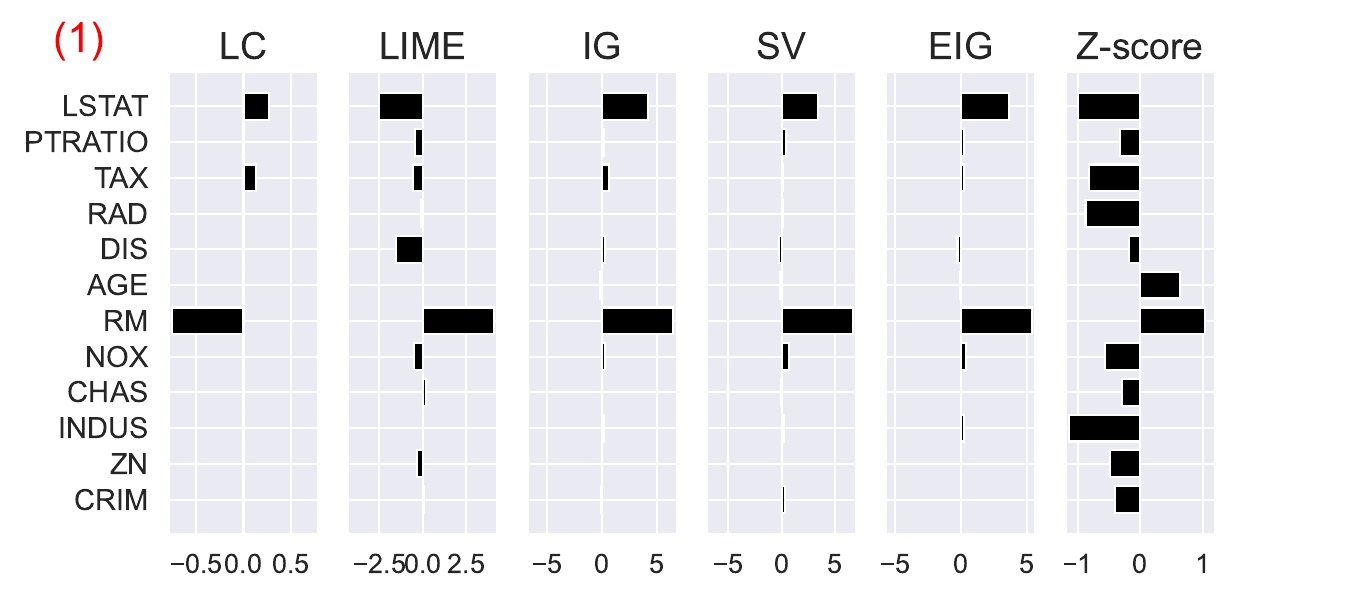}
\hspace{-8mm}
\includegraphics[trim={0.cm 0.2cm 0cm 0cm},clip,width=7.9cm]{./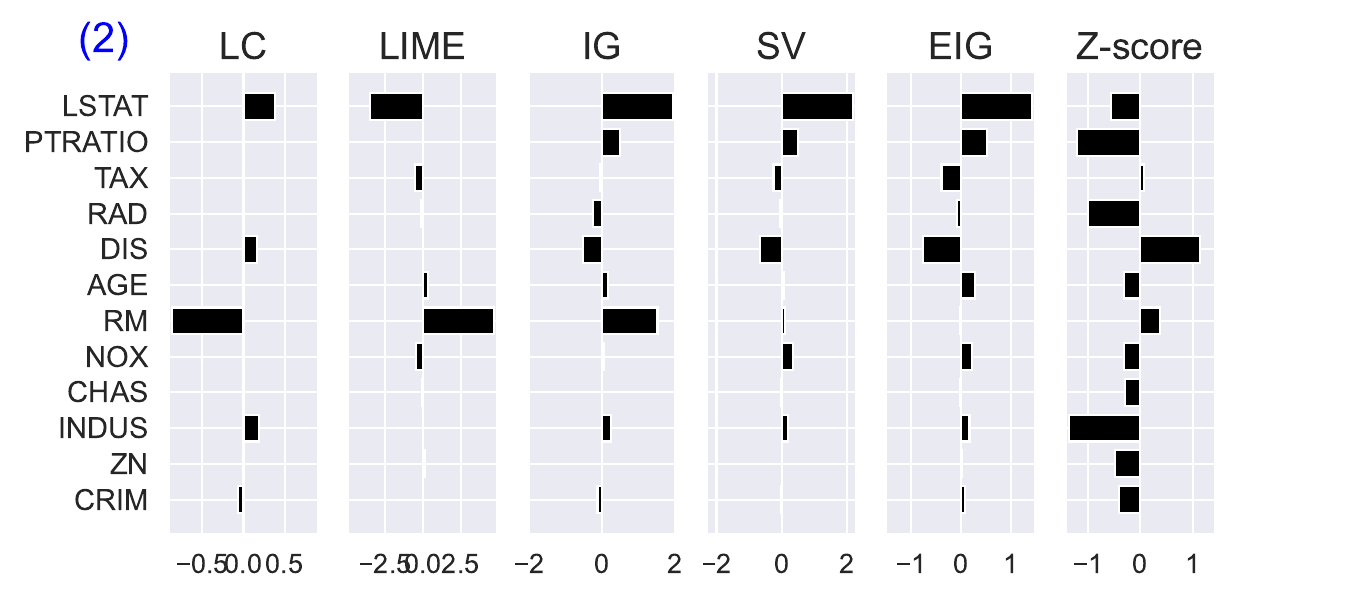}
\caption{Boston Housing: Comparison of the attribution scores for (1) the first and (2) the second outliers highlighted with the square (red) and triangle (blue) symbols, respectively, in Fig.~\ref{fig:Boston_2outliers_5variables_scatter}. }
\label{fig:Boston_2outlier_score_comparison}
\end{figure}

\paragraph{California Housing data}


To confirm general applicability of LC, we next used the California Housing dataset~\cite{pace1997sparse}. This is a relatively large data set of $20\,640$ samples with 9 predictor variables, of which we log-transformed four variables (`AveRooms', `AveBedrms', `Population', `AveOccup'). The task is to predict the median house value of small geographical segments using predictor variables such as the median household income of each segment. We randomly held out 20\% of the samples after standardization and trained gradient boosted trees (GBT)~\cite{friedman2002stochastic} on the rest. Following the same procedure as that for the Boston Housing dataset, we computed anomaly scores for the held-out samples as shown in Fig.~\ref{fig:anomaly_score_boston_calif_diabetes.pngf}~(b). Then we took the one with the highest anomaly score (highlighted with the red rectangle) as the test sample $(\bmx^t,y^t)$. 

Figure~\ref{fig:California_scatter_406_seed823} shows pairwise scatter plots of selected variables against the target variable (`MedHouseVal'). An interesting bi-modal structure is observed in the two variables. As clearly seen from the figure, the test sample is off the main cluster in many variables. From the figure, we see, for example, that the median income of the segment (`MedInc') is a bit too small and the average number of household members (`AveOccup') is a bit too large. One interesting question is whether these observations are consistent with LC's attribution. 

To answer this question, we compare computed attribution scores in Fig.~\ref{fig:California_score_comparison_406_seed823.pdf}. We used $\lambda=0.4,\nu=0.2,\kappa=0.1$ for LC, which were adjusted using the same approach as the Boston case. The same regularization parameter was used for LIME. IG's baseline input was set to be the origin (the population mean of $\bmx$). For EIG, SV, and the $Z$-score, we used empirical approximation using 100 bootstrapped samples from the training data to simulate a `semi-doubly black-box' situation, where only a limited number of test samples are available (see Sec.~\ref{sec:variability_of_EIG_SV} for more detail). The bootstrap approach allows estimating the distribution of attribution scores (see the next subsection for the detail). The $Z$-score confirms that MedInc and AveOccup are significantly smaller and larger than the mean, respectively. However, the attribution scores for the latter are almost zero in all the five methods. This is a clear example where the deviation in $\bmx$ does not necessarily mean being an outlier in regression. Specifically, since AveOccup is almost unrelated with the target variable, as suggested by the distribution in Fig.~\ref{fig:California_scatter_406_seed823}, any shifts in that variable should not decisively explain the anomalousness. 

For MedInc, on the other hand, LC pinpoints that it is the biggest contributor and a large positive attribution score provides a counterfactual explanation: If MedInc were a bit higher, the sample would have looked less unusual. In other words, the median income was a bit too low for the median house value.

\begin{figure}[tb]
\centering
\includegraphics[trim={0.cm 0.cm 0cm 0cm},clip,width=15cm]{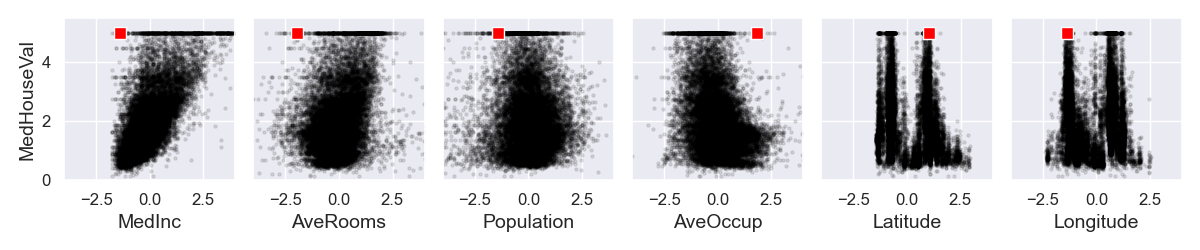}
\vspace{-2mm}
\caption{California Housing: Pairwise scatter plot between $y$ (`MedHouseVal') and selected input variables. The square symbol indicates the top outlier in Fig.~\ref{fig:anomaly_score_boston_calif_diabetes.pngf} (b). }
\label{fig:California_scatter_406_seed823}
\end{figure}

\begin{figure}[tb]
\centering
\includegraphics[trim={0.cm 0.cm 0cm 0cm},clip,width=14cm]{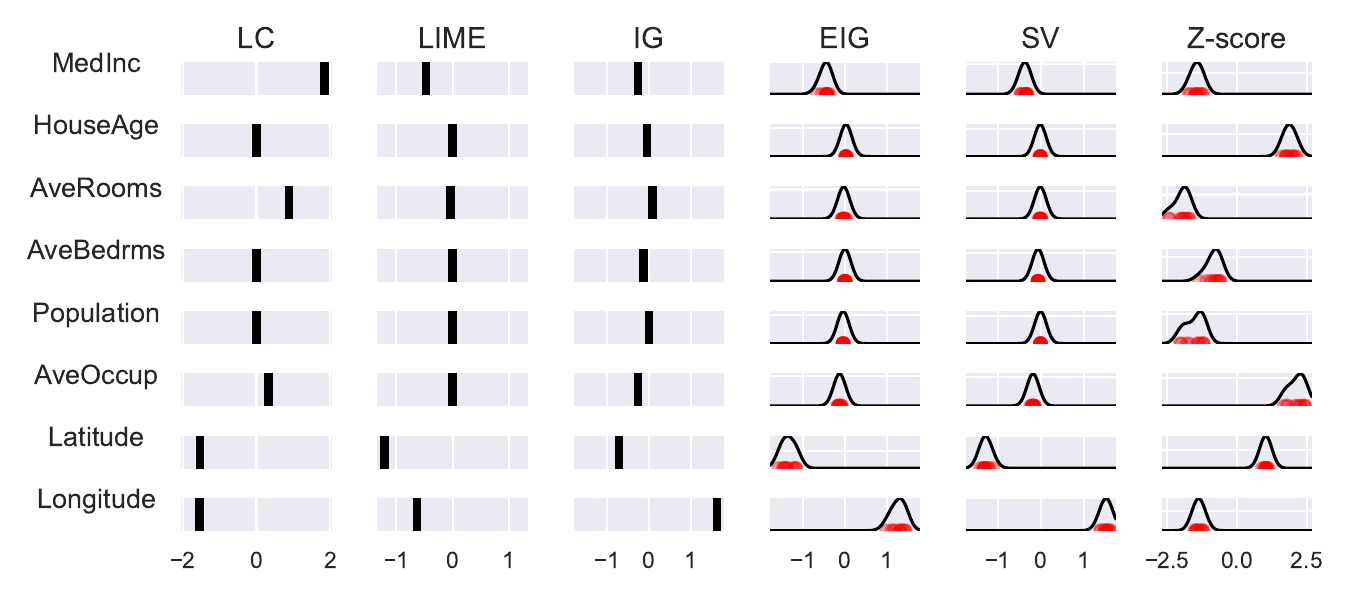}
\vspace{-2mm}
\caption{California Housing: Comparison of attribution scores for the outlier highlighted with the red square in Fig.~\ref{fig:California_scatter_406_seed823}. }
\label{fig:California_score_comparison_406_seed823.pdf}
\end{figure}

\subsection{Variability of EIG and SV} \label{sec:variability_of_EIG_SV}

\paragraph{Boston Housing} 

Let us go back to Figs.~\ref{fig:Boston_2outlier_score_comparison}~(1) and~(2) of Boston Housing. Between these two outliers, LC, LIME, and IG exhibit consistent scores. Interestingly, this is not the case in SV and EIG: IG's large weight on RM vanishes in EIG and SV, resulting in significantly different attributions. Since the baseline input of IG was set to be the mean of $\bmx$, one would expect largely consistent attributions between IG and EIG (and hence, SV, according to Theorem~\ref{th:SV=EIG}). Recall  Fig.~\ref{fig:problem_setting_and_deviation_v_increment}~(b) and the sum rule in Eq.~\eqref{eq:EIG's_efficiency}. In this particular case, IG has a large positive weight on RM because the mean is on the left to the test points, and, as going from the mean to the test points, $f(\bmx)$ value goes up along the globally positive gradient. In (2), the test point is closer to the mean than (1), as indicated by their $Z$-scores; Its RM value is located in the densest region of RM. In such a case, the contribution of the variable tends to be unpredictable because the paths from a test point can be in almost any direction, even from the right, and averaging out those paths can result in either a large or a near-zero value. In general, the attribution of EIG can be counter-intuitive when the test input is close to the population mean of $\bmx$ and the local gradient is steep. 

This ``vanishing weight'' issue originates mainly from the fact that IG and EIG (and thus, SV) explain the \textit{increment} rather than the deviation. We see the test points as outliers  Fig.~\ref{fig:Boston_2outlier_score_comparison}~(2) because of the large deviations in $\bmy$, i.e.\ the vertical shifts. However, the deviation-agnostic methods do not see the test points in that way. This relatively simple example empirically demonstrates the subtle but intricate difficulty in using deviation-agnostic algorithms for anomaly attribution.

\begin{figure}[tb]
\centering
\includegraphics[trim={0.cm 0.3cm 0cm 0cm},clip,width=13cm]{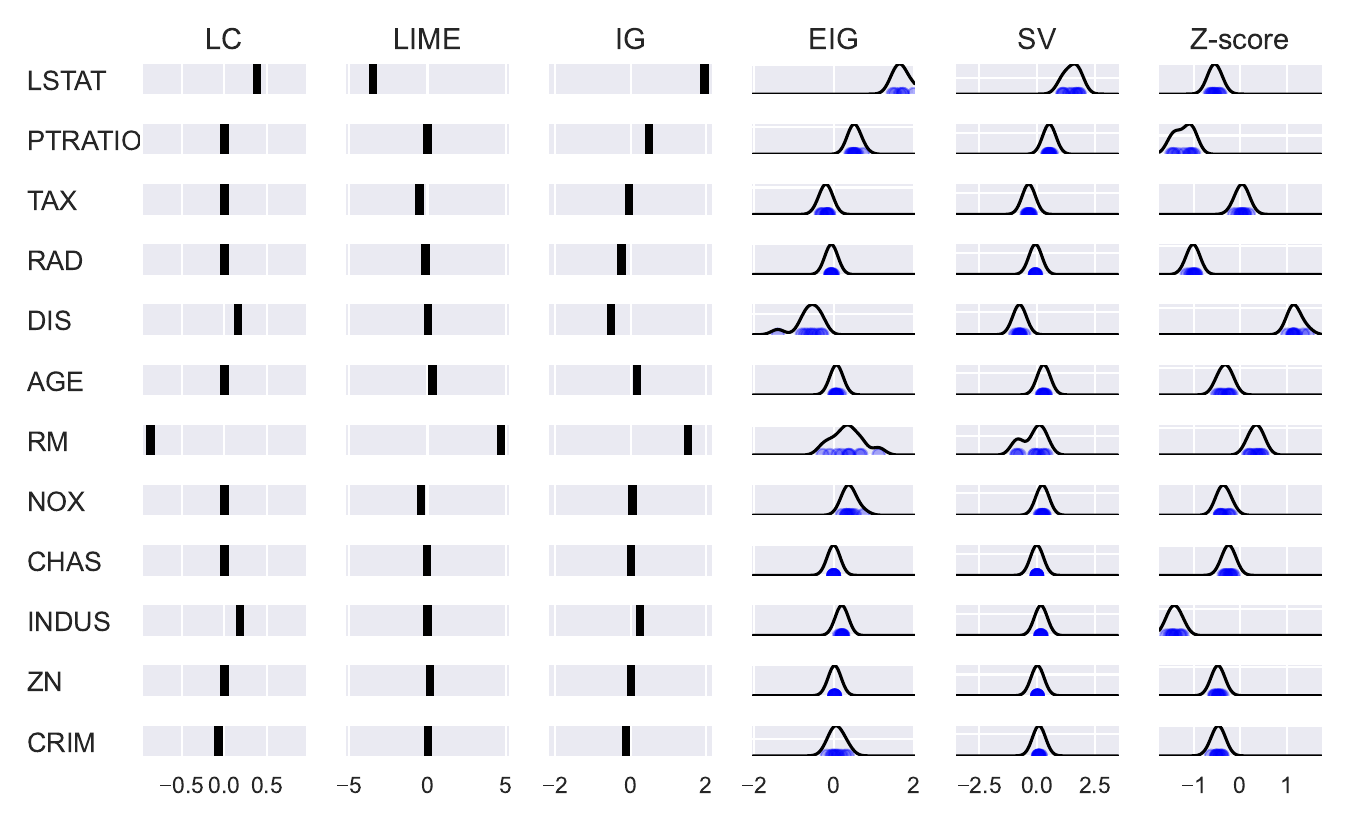}
\vspace{-0.2cm}
\caption{Boston Housing: Comparison of attribution scores with simulated score variability for the outlier highlighted with the blue triangle in Fig.~\ref{fig:Boston_2outliers_5variables_scatter}. }
\label{fig:Boston_score_variability}
\end{figure}

As mentioned earlier, EG, SV, and the $Z$-score are basically inapplicable to anomaly attribution in the doubly black-box setting. If there is some amount of samples, however, one can compute these quantities somehow by taking the empirical average, as we have done above. One question here is whether those methods can produce stable attribution scores under a limited number of samples. To simulate such a situation, we created 10 sets of bootstrapped datasets of $N_{\mathrm{b}}=100$, generated from the training set of the Boston Housing data. Between the two outliers in Fig.~\ref{fig:Boston_2outlier_score_comparison}, we focused on the second one as it has much more diversity in the score. The expectation in EIG and SV was performed using the empirical approximation by the bootstrapped samples. The distribution of the attribution scores are shown in Fig.~\ref{fig:Boston_score_variability} for EIG, SV, and the $Z$-score, where the points denote computed score values of the 10 bootstrap rounds. The curves were estimated Gaussian-based kernel density estimation\footnote{We used the \texttt{KernelDensity} implementation of scikit-learn 1.2.0. }
\begin{align}
    p(s_i) = \frac{1}{N_\mathrm{b}}\sum_{l=1}^{N_\mathrm{B}}  
    \calN(s_i \mid s_i^{(l)}, \eta_\mathrm{b}^2),
\end{align}
where $s_i$ denotes the attribution score of the $i$-th variable,  $s_i^{(l)}$ denotes the computed $s_i$ value on the $l$-th bootstrapped replica (dataset), and $N_\mathrm{B}$ is the number of bootstrapped replicas, which is 10 in this case. Also, the bandwidth $\eta_\mathrm{b}$ was set to 4\% of the range for each variable. As LC, LIME and IG do not need $P(\bmx)$, their scores are the same as those presented in Fig.~\ref{fig:Boston_2outlier_score_comparison}. The same approach was used to draw the curves in Fig.~\ref{fig:California_score_comparison_406_seed823.pdf}. 

From Fig.~\ref{fig:Boston_score_variability}, we first see that there is a systematic similarity between SV's and EIG's scores. This is an empirical confirmation of Theorem~\ref{th:SV=EIG}. As SV is computationally demanding in high-dimensional data due to its combinatorial nature, one can use EIG to approximate SV in practice. Given the vanishing weight issue of EIG, even IG could be used as long as prior knowledge naturally defines the baseline input, although these methods are deviation-agnostic and not the best choice for anomaly attribution. Second, in SV and EIG, the score distributes around zero in many variables. This underscores the need for a sparsity-enforcing mechanism in attribution. In contrast, LC and LIME are not distracted by pseudo-signals thanks to the sparsity in their scores. Third, the variability of the score can be extremely large in some variables. In fact, in RM, the standard deviation of the distribution is even larger than the absolute score itself. This means that RM's attribution score is not at all trustworthy. The large variance issue in RM is another manifestation of the vanishing weight issue discussed in the previous subsection. The above observations remained unchanged when we increase $N_{\mathrm{b}}$ e.g.~to 200.

\subsection{Consistency among attribution methods}

Figures~\ref{fig:California_score_comparison_406_seed823.pdf} and~\ref{fig:Boston_score_variability} indicate that LC and the alternative attribution methods are largely consistent at least in the top attributions. To quantitatively evaluate the consistency among different attribution methods, we computed the following four metrics. The first and second metric is Kendall's $\tau$ and Spearman's $\rho$ computed on two \textit{absolute} attribution score vectors. They are typically called the \textit{rank correlation coefficients} and take a value of 1 if the order of the two absolute scores are the same regardless of their values. The third metric is what we call the sign match ratio, which takes on 1 when all the signs are consistent between corresponding vector elements. When comparing an attribution score vector $\bmu$ against a reference score vector $\bmr$, the sign match ratio is defined as
\begin{align}
    (\mbox{sign match ratio}) &\triangleq 1 - \frac{1}{M}\sum_{i=1}^M\mathbb{I}\left(\sign(r_i)\sign(u_i) = -1\right),
\end{align}
where $\mathbb{I}(\cdot)$ is the indicator function that takes on 1 when the argument  is true, 0 otherwise. We define $\sign(0)=0$ in this case. Note that this favors sparse attribution scores: If $\bmr =\bmzero$, then the score is always 1 regardless of $\bmu$. Finally, the fourth metric is what we call hit25, which gives 1 when the top 25\% of the absolute entries perfectly match between $\bmr$ and $\bmu$, and 0 if none of the top 25\% members of $\bmr$ is included in that of $\bmu$. As hit25 depends on neither the sign nor the rank, it quantifies simply the match of top contributors.

\begin{figure}[tb]
\centering
\includegraphics[trim={0.cm 0.3cm 0cm 0cm},clip,width=15cm]{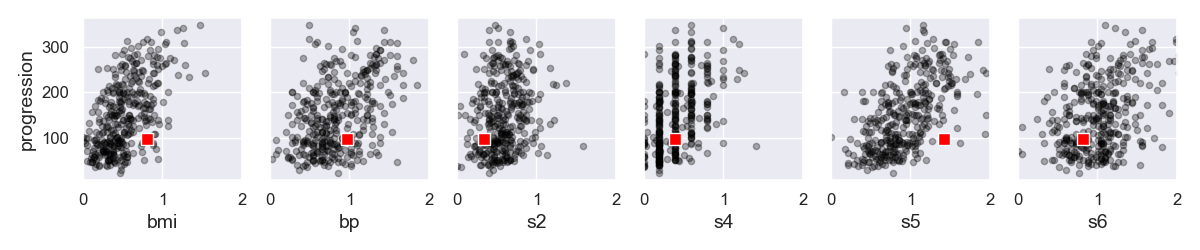}
\vspace{-0.2cm}
\caption{Diabetes: Pairwise scatter plot between $y$ (`progression') and selected input variables. The square highlights the detected top outlier in Fig.~\ref{fig:anomaly_score_boston_calif_diabetes.pngf} (c). }
\label{fig:diabetes_scatter_n63_seed50.pdf}
\end{figure}

\begin{figure}[tb]
\centering
\includegraphics[trim={0.cm 0.3cm 0cm 0cm},clip,width=13cm]{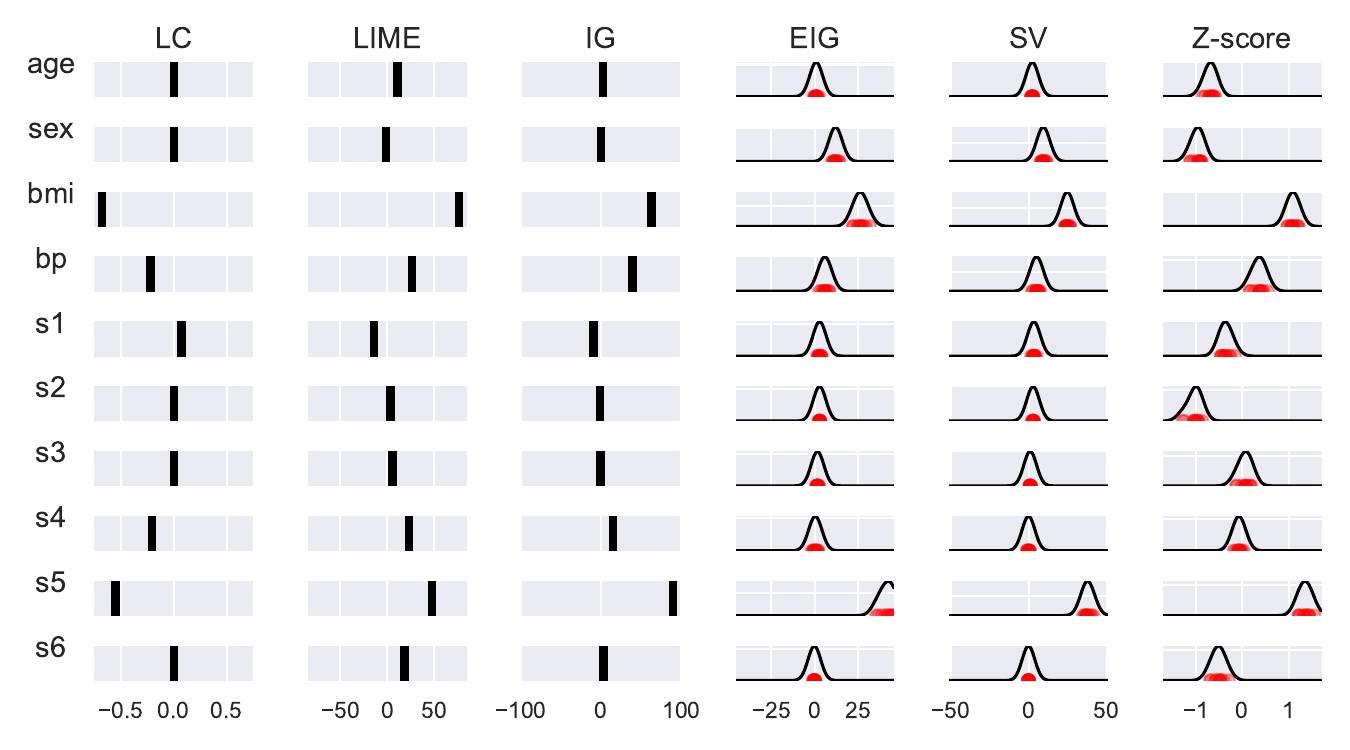}
\vspace{-0.2cm}
\caption{Diabetes: Comparison of attribution scores with simulated score variability for the top outlier highlighted with the square in Fig.~\ref{fig:diabetes_scatter_n63_seed50.pdf}.}
\label{fig:diabetes_score_n63_seed50.pdf}
\end{figure}

\paragraph{Diabetes data} To study the consistency among the attribution methods in a more comprehensive fashion, we added another benchmark dataset called the Diabetes dataset~\cite{efron2004least}, which contains 442 samples of one real-valued target variable (`progression') and $M=10$ predictors including the body-mass index (`bmi'), the average blood pressure (`bp'), and eight other biomarkers. For this dataset, we held-out 20\% of samples after the mini-max scaling with in [0,1] and trained a deep neural network (DNN) on the rest. The resulting model identified has two hidden layers of 32 and 8 neurons with the ReLU (rectified linear unit) activation. The average test $R^2$ score was $0.54$. We thought of the trained DNN as a black-box regression function $y=f(\bmx)$. Following the same procedure as the other datasets, we computed the anomaly score presented in Fig.~\ref{fig:anomaly_score_boston_calif_diabetes.pngf}~(c), where the top outlier is highlighted with the red rectangle. Figure~\ref{fig:diabetes_scatter_n63_seed50.pdf} shows where the outlier point is located in the pairwise scatter plots against the target variable.

For this outlier as the test point $(\bmx^t,y^t)$, we computed attribution scores as shown in Fig.~\ref{fig:diabetes_score_n63_seed50.pdf}. We used the same parameters as those for the California Housing experiment. The baseline input for IG is also at the origin although the origin is no longer the population mean due to the min-max scaling. In this particular test point, LIME, IG, EIG, and SV look quite consistent. LIME simply captures the overall positive trend. IG also captures the positive increment when going from zero (the baseline input in this case) to the test point except for s2 (and s1, which is not shown). As the population mean is close to zero, EIG and SV follow a similar trend.

\paragraph{Comprehensive consistency analysis}

\begin{table}[tb]
\centering
\caption{Result of consistency analysis. The mean and the standard deviation of the metrics are shown in each cell, where 1 represents the highest consistency. }
\label{table:comprehensive_comparision_consistency}
\footnotesize
\begin{tabular}{llllllll}
\hline \hline
                          &            & LC & LIME & IG & EIG & SV & $Z$-score \\
\hline
\multirow{4}{*}{Boston}   & $\tau$     & $1.00\pm 0.00$   &$0.49\pm 0.31$ & $0.61\pm 0.09$ & $0.40 \pm 0.07$ & $0.53\pm 0.16$ & $0.15\pm 0.34$ \\
                          & $\rho$     & $1.00\pm 0.00$   &$0.59\pm 0.32$ & $0.72\pm 0.07$ & $0.49 \pm 0.09$ & $0.61\pm 0.19$ & $0.18\pm 0.39$ \\
                          & sign &    $1.00\pm 0.00$    &$0.68\pm 0.10$ & $0.83\pm 0.08$ & $0.80 \pm 0.09$ & $0.86\pm 0.06$ & $0.74\pm 0.12$ \\
                          & hit25    &$1.00\pm 0.00$    &$0.80\pm 0.18$ & $0.80\pm 0.30$ & $0.60 \pm 0.15$ & $0.67\pm 0.24$ & $0.27\pm 0.28$ \\
\hline
\multirow{4}{*}{California}   & $\tau$     & $1.00\pm 0.00$   &$0.87\pm 0.04$ & $0.75\pm 0.10$ & $0.50 \pm 0.12$ & $0.52\pm 0.13$ & $0.08\pm 0.15$ \\
                          & $\rho$     &$1.00\pm 0.00$    &$0.93\pm 0.03$ & $0.89\pm 0.04$ & $0.65 \pm 0.15$ & $0.67\pm 0.15$ & $0.16\pm 0.21$ \\
                          & sign    & $1.00\pm 0.00$   &$0.48\pm 0.04$ & $0.48\pm 0.04$ & $0.62 \pm 0.13$ & $0.66\pm 0.15$ & $0.66\pm 0.05$ \\
                          & hit25    &$1.00\pm 0.00$    &$1.00\pm 0.00$ & $0.90\pm 0.22$ & $0.80 \pm 0.27$ & $0.80\pm 0.27$ & $0.30\pm 0.44$ \\
\hline
\multirow{4}{*}{Diabetes} & $\tau$     &$1.00\pm 0.00$    &$0.67\pm 0.08$ & $0.53\pm 0.12$ & $0.52 \pm 0.21$ & $0.54\pm 0.19$ & $-0.04\pm 0.19$ \\
                          & $\rho$     & $1.00\pm 0.00$   &$0.78\pm 0.07$ & $0.67\pm 0.13$ & $0.66 \pm 0.17$ & $0.66\pm 0.19$ & $-0.09\pm 0.27$ \\
                          & sign    &$1.00\pm 0.00$   &$0.85\pm 0.22$ & $0.68\pm 0.11$ & $0.65 \pm 0.14$ & $0.65\pm 0.14$ & $0.65\pm 0.16$ \\
                          & hit25    & $1.00\pm 0.00$   &$0.90\pm 0.22$ & $0.80\pm 0.27$ & $0.80 \pm 0.27$ & $0.80\pm 0.27$ & $0.20\pm 0.27$\\
\hline
\end{tabular}
\end{table}

On the other hand, LC looks like the LIME attribution scores with the opposite sign in Fig.~\ref{fig:diabetes_score_n63_seed50.pdf}, which was also the case in Fig.~\ref{fig:Boston_score_variability}. One interesting question here is whether or not there is a systematic correspondence of LC's attribution to the other methods, especially when we ignore the signs. Among the four metrics, $\tau,\rho$, and hit25 are about the absolute scores. In the three data sets, we picked top five outliers and computed the attribution scores for them. For each, we computed the four metrics with the reference $\bmr$ being the LC's score. Those metrics are designed to capture the consistency between the top contributors, disregarding minor contributors. LC is the only method that has both a built-in sparsity-enforcing mechanism and the deviation-sensitive property, making it an appropriate choice for the reference. 

The result is summarized in Table~\ref{table:comprehensive_comparision_consistency}. As expected, hit25 has generally high scores, apart from the $Z$-score. This suggests that those attribution methods are a useful tool to select important features. Even in the other metrics including the sign match ratio, they produce reasonably consistent attributions in some cases. However, some 20-30\% of cases are still not necessarily consistent, which is a natural consequence that LC is deviation-sensitive but the others are not.

\subsection{Real-world business use-case: building energy management}
\label{subsec:bulding experiments}

Finally, we provide an example about how LC can make a difference in a real business use-case. Collaborating with IBM IoT Business Unit, we obtained energy consumption data for an office building in India. The total wattage $y$ is predicted by a black-box commercial prediction tool as a function of weather-related (temperature, humidity, etc.) and time-related variables (time of day, day of week, month, etc.). There are two intended usages of the predictive model. One is near future prediction with short time windows for optimizing HVAC (heating, ventilating, and air conditioning) system control. The other is \textit{retrospective} analysis over the last few months for the purpose of planning long-term improvement of the building facility and its management policies. In the retrospective analysis, it is critical to get clear explanation on unusual events.

At the beginning of the project, we interviewed 10 professionals on what kind of model explainability would be most useful for them. Their top priority capabilities were uncertainty quantification in forecasting and anomaly diagnosis in retrospective analysis. Our choices in the current research reflect these business requirements.

\begin{figure}[tbh]
\begin{center}
\includegraphics[trim={0.cm 0cm 0cm 0cm},clip,width=8cm]{./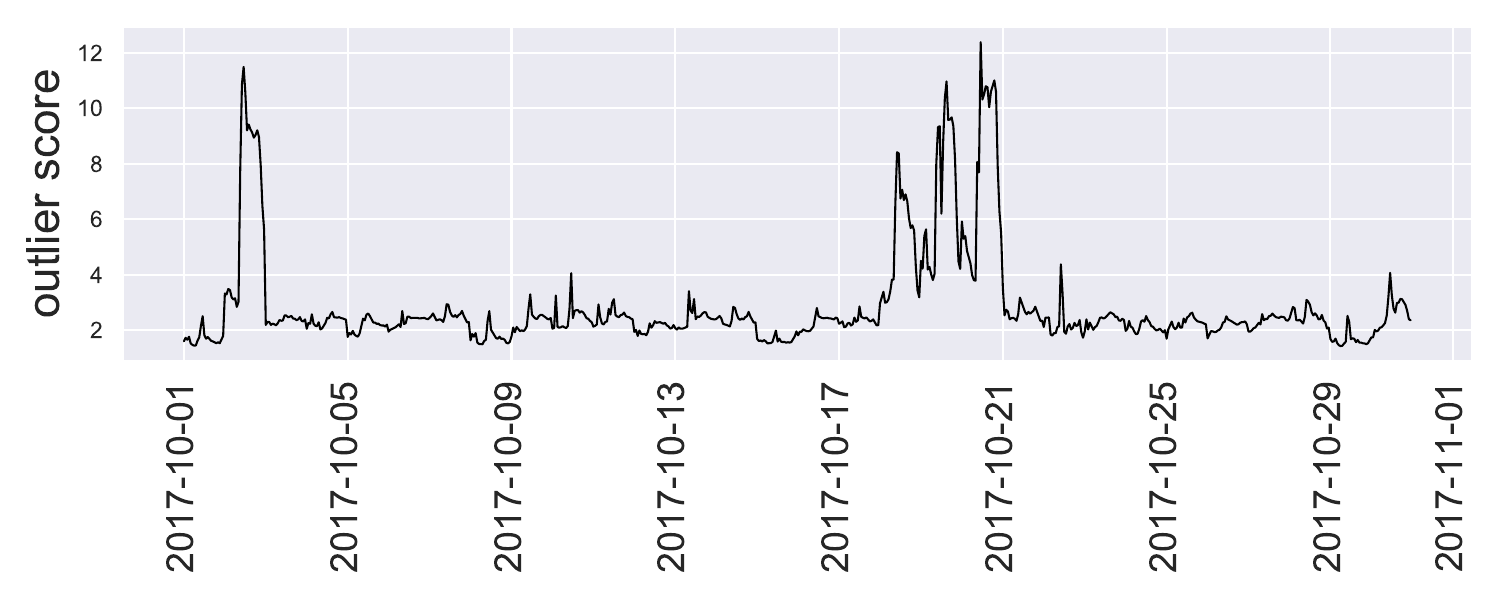}
\end{center}
\vspace{-0.4cm}
\caption{Building Energy: Outlier score computed with Eq.~\eqref{eq:changeScoreDef} for the test data. }
\label{fig:Building_anomalyScore.pdf}
\vspace{-0.4cm}
\end{figure}

\paragraph{Anomaly detection}

We obtained a one month worth of test data with $M=12$ input variables recorded hourly. We first validated the assumed Gaussian observation model by performing kernel density estimation for the probability density function (p.d.f.)~of $y$. As shown in Fig.~\ref{fig:Building_kde_comparison_GBTree_29_highQ.pdf}, the p.d.f.~of $y$ itself is double-peaked, corresponding to different consumption patterns between night and day. On the other hand, the p.d.f.~of $y^t - f(\bmx^t)$ in the right panel is single-peaked, which confirms the validity of the Gaussian-based model in Eq.~\eqref{eq:obs_model}.

\begin{figure}[bt]
\begin{center}
\includegraphics[trim={0.5cm 0cm 1cm 0cm},clip,width=11cm]{./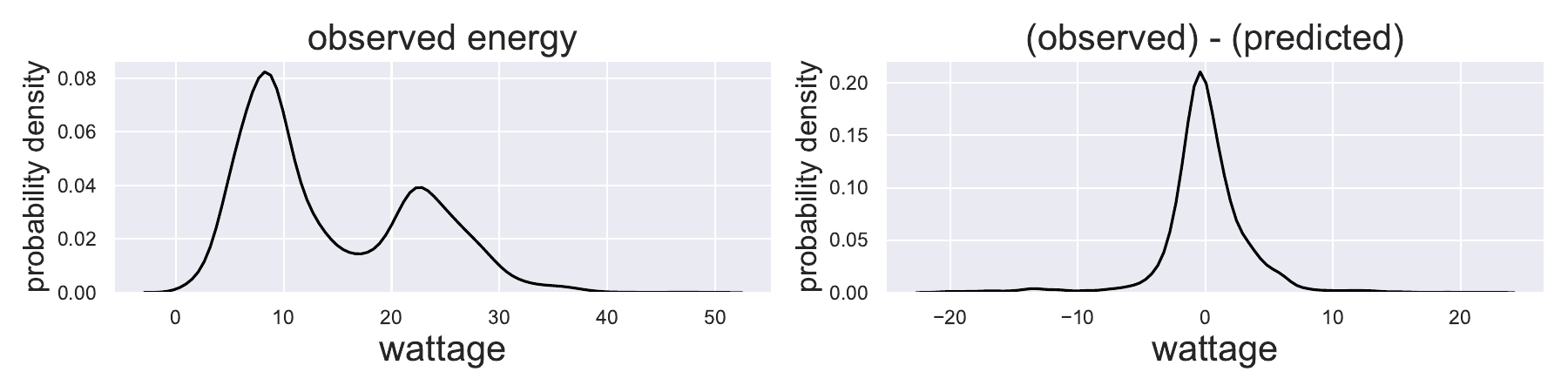}
\end{center}\vspace{-0.3cm}
\caption{Building Energy: Estimated probability densities. Although raw wattage value $y$ (left) is far from Gaussian, the deviation $y-f(\bmx)$ is well approximated by the Gaussian. }
\label{fig:Building_kde_comparison_GBTree_29_highQ.pdf}
\end{figure}

Next, we computed the anomaly score by Eq.~\eqref{eq:changeScoreDef} for each sample $(y^t,\bm{x}^t)$ under the Gaussian observation model. The variance $\sigma^2_t$ was computed using Eq.~\eqref{eq:sigma2_for_test_samples} for each $t$ by leaving $(y^t,\bm{x}^t)$ out from the dataset. The computed anomaly score is shown in Fig.~\ref{fig:Building_anomalyScore.pdf}. We see that there are two periods showing conspicuous anomalies, namely, October 2 and 18-20. An important business question was who or what may be responsible for these anomalies.

\begin{figure*}[t]
\begin{center}
\includegraphics[trim={0.2mm 0.5cm 2.cm 0cm},clip,width=\textwidth]{./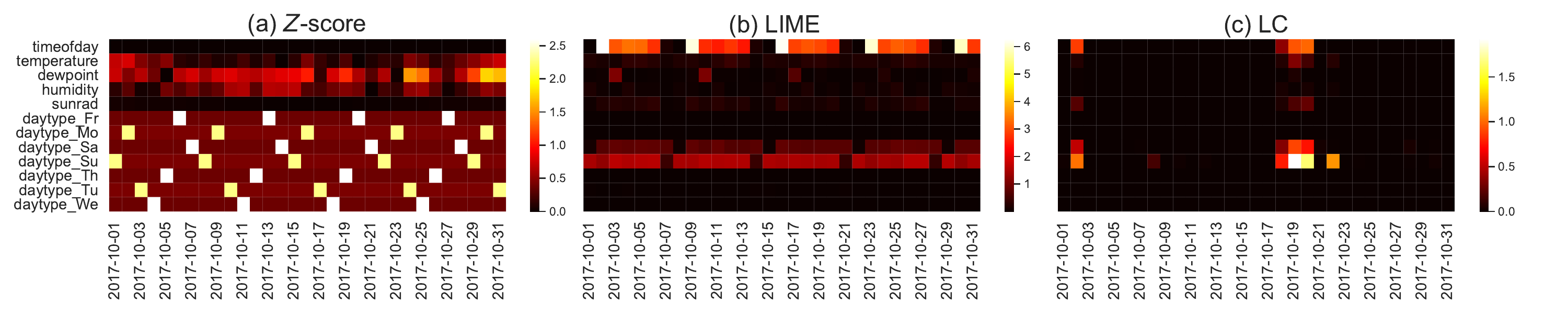}
\end{center}
\caption{Building Energy: Comparison of the explainability scores computed for the test data. }
\label{fig:Building_heatmap_GBTree_29_highQ.pdf}
\vspace{-0.3cm}
\end{figure*}

\paragraph{Anomaly attribution}

To obtain insights regarding the detected anomalies, we computed the LC score as shown in Fig.~\ref{fig:Building_heatmap_GBTree_29_highQ.pdf}, where we computed $\bm{\delta}$ each day with $N_\mathrm{test}=24$ in Eq.~\eqref{eq:LC_Gaussian_elastic_net}, and visualized $\|\bm{\delta}\|_2^2$. We also compared two alternative methods that were applicable: For the $Z$-score, we visualized the daily mean of the absolute values. For LIME, we computed regression coefficients for every sample, and visualized the $\ell_2$ norm of their daily mean. We used $(\nu, \lambda)=(0.1,0.5)$, which was determined by the level of sparsity and scale preferred by the domain experts. IG, EIG, and SV need either a baseline input or training data, and were inapplicable to this real-world setting. 

As shown in the plot, the LC score clearly highlights a few variables whenever the outlier score is exceptionally high in Fig.~\ref{fig:Building_anomalyScore.pdf}, while the $Z$-score and LIME do not provide much information beyond the trivial weekly patterns. The pattern of LIME was very stable over $0 <\nu \leq 1$, showing empirical evidence of the deviation-agnostic property. On the other hand, the $Z$-score sensitively captures the variability in the weather-related variables, but it fails to explain the deviations in Fig.~\ref{fig:Building_anomalyScore.pdf}. This is understandable because the $Z$-score does not reflect the relationship between $y$ and $\bm{x}$. The artifact seen in the ``daytype'' variables is due to the one-hot encoding of the day of week.

With LC, the strongest signal is observed around October 19 (Thursday) in Fig.~\ref{fig:Building_anomalyScore.pdf}. The variables highlighted are `timeofday', `daytype\_Sa', and `daytype\_Su', implying that those days had an unusual daily wattage pattern for a weekday and looked more like weekend days. Interestingly, it turned out that the 19th was a national holiday in India and many workers were off on and around that date. The other anomalous period on October 2 was also a national holiday. Thus we conclude that the anomaly is most likely not due to any faulty building facility, but due to the model limitation caused by the lack of full calendar information. Though simple, such pointed insights made possible by our method were highly appreciated by the professionals.\footnote{LC has been productized as IBM's software offering.}

\section{Conclusions}

We have proposed a new framework for model-agnostic anomaly attribution in the doubly black-box regression setting. We mathematically proved that integrated gradient IG), local linear surrogate modeling (LIME), and Shapley values (SV) are inherently deviation-agnostic and thus, cannot be a viable solution for anomaly attribution. We have clarified a mathematical structure leading to the deviation-agnostic property using a power expansion technique. Unlike these methods, the proposed likelihood compensation approach is built upon the maximum likelihood principle, and is capable of capturing specific characteristics of anomalies observed. We conducted a comprehensive empirical study using benchmark datasets to verify our mathematical characterization such as an equivalence between SV and IG. We also validated the proposed method on a real-world use-case of building energy management where---based on expert feedback received---the proposed LC method offers significant practical advantages over existing methods.



\vskip 0.2in
\bibliography{ide_et_al}
\bibliographystyle{apalike}

\appendix
\renewcommand{\thesection}{\Alph{section}}
\renewcommand{\theequation}{\Alph{section}.\arabic{equation}}
\renewcommand{\thefigure}{\Alph{section}.\arabic{figure}}
\renewcommand{\thetable}{\Alph{section}.\arabic{table}}

\setcounter{equation}{0}
\setcounter{section}{0}
\setcounter{figure}{0}
\setcounter{table}{0}

\section{Efficiency of Shapley value}
\label{appendix:SV-efficiency}

In this section, we prove the ``efficiency'' of SV in Eq.~\eqref{eq:SV_efficiency}:
\begin{gather*}
    \sum_{i=1}^M \mathrm{SV}_i(\bmx^t) = f(\bmx^t) - \langle f \rangle.
\end{gather*}
\begin{proof}
In
\begin{align}
    \sum_{i=1}^M \mathrm{SV}_i(\bmx^t) 
    = \frac{1}{M} \sum_{i=1}^M
\sum_{k=0}^{M-1} 
\binom{M-1}{k}^{-1}
\sum_{\mathcal{S}_i: |\calS_i|=k} \left[
\langle f \mid x_i^t, \bm{x}_{\mathcal{S}_i}^t \rangle
- \langle f \mid \bm{x}_{\mathcal{S}_i}^t \rangle
\right].
\end{align}
    let us define
    \begin{align}
            I^M_k &\triangleq \frac{1}{M}\sum_{i=1}^M\binom{M-1}{k}^{-1}
     \sum_{\calS_i: |\calS_i|=k}   \langle f\mid x_i^t, \bmx_{\calS_i}^t\rangle,
    \\
    J^M_k &\triangleq \frac{1}{M}\sum_{i=1}^M\binom{M-1}{k+1}^{-1}
    \sum_{{\calS}_i: |{\calS}_i|=k+1}  \langle f\mid \bmx^t_{{\calS}_i}\rangle
    \end{align}
for $k=0, \ldots,M-2$. Equation~\eqref{eq:SV_efficiency} holds if $I^M_k - J^M_k=0$. In $I^M_k$, for a given $k$, the number of distinct sets $\{i\}\cup \calS_i$ is $\binom{M}{k+1}$, and the summations in $I^M_k$ runs over $M\times \binom{M-1}{k}$ terms in total. Hence, each unique set appears
\begin{align}
    M\times \binom{M-1}{k} \times \frac{1}{\binom{M}{k+1}} = k+1
\end{align}
times. Following the same argument, we see that each unique set in $J^M_k$ appears $M-k-1$. 

Let $\calS(k+1)$ be a set of $k+1$ variable indices to represent either $\{i\}\cup \calS_i$ with $|\calS_i|=k$ or $\calS_i$ with $|\calS_i|=k+1$. For each member in $\calS(k+1)$, $I^M_k$ gives a prefactor
\begin{align}
    (k+1) \times  \frac{1}{M}\binom{M-1}{k}^{-1} = \binom{M}{k+1}^{-1},
\end{align}
and $J^M_k$ gives a prefactor
\begin{align}
    (M -  k - 1) \times  \frac{1}{M}\binom{M-1}{k+1}^{-1} = \binom{M}{k+1}^{-1},
\end{align}
which are the same. Hence, we conclude $I^M_k - J^M_k=0$.
\end{proof}

\end{document}